\documentclass[journal]{IEEEtran}

\usepackage{bibentry}  
\usepackage{xcolor,soul,framed} 
\colorlet{shadecolor}{yellow}
\usepackage[pdftex]{graphicx}
\graphicspath{{../pdf/}{../jpeg/}}
\DeclareGraphicsExtensions{.pdf,.jpeg,.png}
\usepackage[cmex10]{amsmath}
\usepackage{array}
\usepackage{mdwmath}
\usepackage{mdwtab}
\usepackage{eqparbox}
\usepackage{url}
\usepackage{array}
\usepackage{amsmath}
\usepackage{amsthm}
\usepackage{amssymb}
\usepackage{graphicx}
\usepackage{caption}
\usepackage{wrapfig}
\usepackage[algo2e]{algorithm2e}
\usepackage{algorithm}
\usepackage{algpseudocode}
\usepackage{float}
\usepackage{subfig}
\usepackage{dblfloatfix}
\usepackage{wrapfig}
\usepackage{setspace}
\usepackage{pifont}
\usepackage{tabularx, multirow, booktabs}
\DeclareMathOperator{\supp}{supp}
\DeclareMathOperator{\rank}{rank}
\DeclareMathOperator{\size}{size}
\newtheorem{theorem}{Theorem}
\newtheorem{corollary}{Corollary}
\newtheorem{lemma}{Lemma}
\newtheorem{remark}{Remark}
\newcommand{\ra}[1]{\renewcommand{\arraystretch}{#1}}

\begin{document}
\onecolumn

    \title{Compressive Sensing Using Iterative Hard Thresholding with Low Precision Data Representation: Theory and Applications}
  \author{Nezihe Merve G\"urel,
      Kaan Kara,
      Alen~Stojanov,
      Tyler~Smith,
      Thomas Lemmin,\\
      Dan~Alistarh,
      Markus~P\"uschel,
      and~Ce~Zhang

  \thanks{Manuscript received September 12, 2019; revised April 29, 2020; accepted July 3, 2020. The associate editor coordinating the review of this manuscript and approving it for publication was Dr. Weiyu Xu. (Corresponding author: Nezihe Merve G\"urel).}
  \thanks{1053-587X $\copyright$ 2020 IEEE. Permission from IEEE must be obtained for all other uses, in any current or future media, including reprinting/republishing this material for advertising or promotional purposes, creating new collective works, for resale or redistribution to servers or lists, or reuse of any copyrighted component of this work in other works.
  Digital Object Identifier: 10.1109/TSP.2020.3010355}
  \thanks{N. M. G\"urel, K. Kara, A. Stojanov, T. Smith, T. Lemmin, M. P\"uschel and C. Zhang are with the Department of Computer Science, ETH Zurich, CH-8092 Zurich, Switzerland (e-mails: \{nezihe.guerel, k.kara, astojanov, tyler.smith, thomas.lemmin, pueschel, ce.zhang\}@inf.ethz.ch). }
  \thanks{D. Alistarh is with the Institute of Science and Technology Austria, A-3400 Klosterneuburg, Austria (e-mail: dan.alistarh@ist.ac.at).}} 

\markboth{IEEE TRANSACTIONS ON SIGNAL PROCESSING~Vol. 68, No. 7, pp. 4268-4282, 2020
}{Roberg \MakeLowercase{\textit{et al.}}: Compressive Sensing Using Iterative Hard Thresholding with Low Precision Data Representation: Theory and Applications}

\maketitle

\begin{abstract}
Modern scientific instruments produce vast amounts of data, which can overwhelm the processing ability of computer systems. Lossy compression of data is an intriguing solution, but comes with its own drawbacks, such as potential signal loss, and the need for careful optimization of the compression ratio. In this work, we focus on a setting where this problem is especially acute: compressive sensing frameworks for interferometry and medical imaging. We ask the following question: can the precision of the data representation be lowered for all inputs, with recovery guarantees and practical performance? Our first contribution is a theoretical analysis of the normalized Iterative Hard Thresholding (IHT) algorithm when all input data, meaning both the measurement matrix and the observation vector are quantized aggressively. We present a variant of low precision normalized {IHT} that, under mild conditions, can still provide recovery guarantees. The second contribution is the application of our quantization framework to radio astronomy and magnetic resonance imaging. We show that lowering the precision of the data can significantly accelerate image recovery. We evaluate our approach on telescope data and samples 
of brain images using CPU and FPGA implementations achieving up to a 9x speed-up with negligible loss of recovery quality.
\end{abstract}

\begin{IEEEkeywords}
Compressive sensing, normalized IHT, data compression, stochastic quantization
\end{IEEEkeywords}

\IEEEpeerreviewmaketitle

\section{Introduction}\label{Sec:Introduction}
\IEEEPARstart{T}{he} ability to collect, store, and process substantial amounts of data is enabling a next generation of {\em data intensive} scientific instruments. Such instruments require extremely advanced capabilities in terms of engineering, algorithms, calibration, and storage~\cite{Hu2014}. Compressive sensing~\cite{donoho2006cs, candes2006cs, candes2006cs2} is a powerful mathematical framework behind many of these instruments. Compressive sensing algorithms can learn the sparse representation of analog signals from only a few samples, enabling the efficient collection, processing, and storage of very large amounts of data.
An interesting property of the compressive sensing problem
and of many compressive sensing solvers is
their tolerance to noise introduced by
{\em data quantization}. Several previous studies
have taken advantage of this, decreasing the
precision of data representation to as low as a single bit~\cite{boufounos20091bitcs, ai20121bitcs, jacques20111bit, laska20111bitcs,plan20111bitcs,plan20121bitcs, gupta2015dl, gopi20131bitcs}.

\begin{table}[t]
\centering
\ra{0.9}
\caption{Comparison with previous work. $Q({\bf \Phi})$ and $Q({\bf y})$ indicate whether quantization of the measurement matrix ${\bf \Phi}$ or quantization of the observation vector ${\bf y}$ are considered
($\checkmark$: yes,  {\scriptsize \ding{53}} : no).}
\label{tab:cs}
\begin{tabular}{@{}llcc@{}}\toprule
                  &{Assumption on ${\bf \Phi}$ }& {$Q(\bf{\Phi})$}  & {$Q({\bf y})$} \\
\hline\\ [-1.5ex]
{ Boufounos et al.~\cite{boufounos20091bitcs}}
 & {Gaussian}  & \ {\ding{53}} & \ $ \checkmark$\\
 { Ai et al.~\cite{ai20121bitcs}}        & {unit variance} & \ {\ding{53}} & \ $\checkmark$\\
 { Jacques et al.~\cite{jacques20111bit}}     &  {RIP} & \ {\ding{53}}& \ $ \checkmark$\\
 { Laska et al.~\cite{laska20111bitcs}}     & {Gaussian} \&  {RIP} &\ {\ding{53}} & \ $\checkmark$\\
 { Plan et al.~\cite{plan20111bitcs}}     & {Gaussian} \&  {RIP} & \ {\ding{53}} & \ $ \checkmark$\\
 { Plan et al.~\cite{plan20121bitcs}}      & {Gaussian} & \ {\ding{53}}& \ $\checkmark$\\
 { Gupta et al.~\cite{gupta2015dl}}     & {Gaussian} &\ {\ding{53}} & \ $ \checkmark$\\
 { Gopi et al.~\cite{gopi20131bitcs}}       & {sub-Gaussian/binary} \&  {RIP} & \ $\checkmark$ & \ $\checkmark$\\
  { {\bf  This work}  }              & {non-symmetric}  {RIP} & \ $\small \checkmark$ & \  $ \checkmark$ \\
\bottomrule
\end{tabular}
\end{table}

Most of the previous work focused on the case where quantization is carried out only on the observation vector (Table~\ref{tab:cs}). In only one single previous study, both the observation vector and the measurement matrix were quantized by imposing additional assumptions on the measurement matrix (sub-Gaussian or binary) ~\cite{gopi20131bitcs}.
In this paper, we take this direction further and investigate the design of a compressive
sensing solver which quantizes both the measurement matrix {\em and} the observation vector, while imposing a more general set of assumptions
on the measurement matrix.

\paragraph*{Summary of technical contributions}
The main technical contribution of this paper is
a new theoretical analysis  showing that normalized Iterative Hard Thresholding 
(IHT)~\cite{blumensath2010niht}, a popular
algorithm for compressive sensing,
converges with guarantees on the recovery
quality even when both the measurement matrix and the
observation vector are stored in lower
precision. This result holds provided that the measurement matrix
satisfies a mild Restricted Isometry Property (RIP) condition, known as non-symmetric RIP~\cite{blumensath2010niht, eldar2012cs}.

We validate our theoretical results in the context of  two real-world applications: 
radio astronomy and magnetic resonance imaging.
We show that our framework has strong signal recovery performance, as illustrated 
in Figure~\ref{fig:result_highlight}, by leveraging the structure of the measurement matrix.

Further, we implemented our approach on both {CPU} and {FPGA} platforms, demonstrating speedups of up to 7x and 9x for full recovery, respectively, 
on instances with a quantized dense measurement matrix.
We believe that the tools we developed are general enough to extend to other sparse reconstruction problems that demand high processing capability.

\paragraph*{Notation} Hereafter, scalars will be written in italics, vectors in bold lower-case and matrices in bold upper-case letters. We define ${\bf x}$ as an $N$-dimensional real or complex sparse vector and ${\bf y}$ as an $M$-dimensional real or complex observation vector. For an $M\times N$ real or complex measurement matrix ${\bf \Phi}$, the matrix element in the $m$th row and $n$th column is denoted as ${\bf \Phi}_{m, n}$ and its Hermitian transpose as ${\bf \Phi}^T$. Also, $\pmb{\phi}_n$ is the $n$th column of $\boldsymbol{\Phi}$ such that ${\bf \Phi}=[\pmb{\phi}_n]_{n=\{1, 2, \dots, N\}}$. The submatrix of ${\bf \Phi}$ obtained by selecting the columns with indices in $\Gamma$ is written as ${\bf \Phi}_{\Gamma} = [\pmb{\phi}_n]_{n\in \Gamma}$, and the $p$-norm by $\| \cdot \|_p$. For the sake of simplicity, we drop $p$ whenever $p=2$. Finally, a 32-bit representation is used for the full precision scheme and {b}${}_{\bf \Phi}$/{b}${}_{\bf y}$ denotes the number of bits used to represent the elements of the measurement matrix ${\bf \Phi}$ and the observation vector ${\bf y}$, respectively.

\subsection{Background and Problem Definition}\label{Sec:ProblemFormulation}

Compressive sensing~\cite{donoho2006cs, candes2006cs, candes2006cs2} is a technique in sparse signal reconstruction that offers a range of efficient algorithms acquiring high dimensional signals from inaccurate and incomplete samples with an underlying sparse structure. 
Many real-world applications including medical imaging, interferometry, and genomic data analysis benefit from these techniques.

In mathematical terms, compressive sensing is formulated as follows: Let a sparse or approximately sparse signal ${\bf x}$ be sampled via a linear sampling operator ${\bf \Phi}$. This means that the observation vector ${\bf y}$ is
\begin{equation}\label{CS_model}
 {\bf y} =  {\bf \Phi}  {\bf x} + {\bf e},
\end{equation}
where ${\bf e}$ is $M$-dimensional observation noise.

Compressive sensing recovery algorithms iteratively compute a sparse estimate $\tilde{{\bf x}}$ with $N\gg M$ such that ${\bf \Phi}  \tilde{\bf x}$ approximates ${\bf y}$ well, that is,  $\| {\bf y}- {\bf \Phi}  \tilde{\bf x} \|$ is small. This problem is
NP-hard due to its combinatorial nature. 
Therefore, most compressive sensing algorithms resort to a convex relaxation of the underlying sparse optimization problem. A collection of thresholding and greedy methods solving this problem have been proposed including Iterative Hard Thresholding ({IHT}) \cite{blumensath2008iht, blumensath2009iht}, Compressive Sampling Matching Pursuit ({CoSaMP}) \cite{needel2008cosamp}, as well as others others~\cite{liu2017dualiht, yuan2014ht, yuan2016htp, blumensath2013cs, needel2008cosamp}. These references also present a comprehensive analysis of the provable performance guarantees for such sparsity-constrained minimization methods, in terms of convergence to fixed point of $\ell_0$-regularized cost functions and the optimality of such approximations. However, when applied to real-life problems this prior work faces additional challenges. For provable guarantees, it is often required that (a) the measurement matrix ${\bf \Phi}$ satisfies the Restricted Isometry Property ({RIP})~\cite{candes2008rip, chartrand2008rip}, and that (b) the sparsity level is chosen appropriately. 
The Normalized {IHT} method~\cite{blumensath2010niht}, relaxes the {RIP} condition by introducing a step size parameter, which enables rigorous guarantees for a broader class of practical problems. Our paper builds upon this line of work.

\begin{figure}[t]
\centering
    \includegraphics[width=0.6\textwidth]{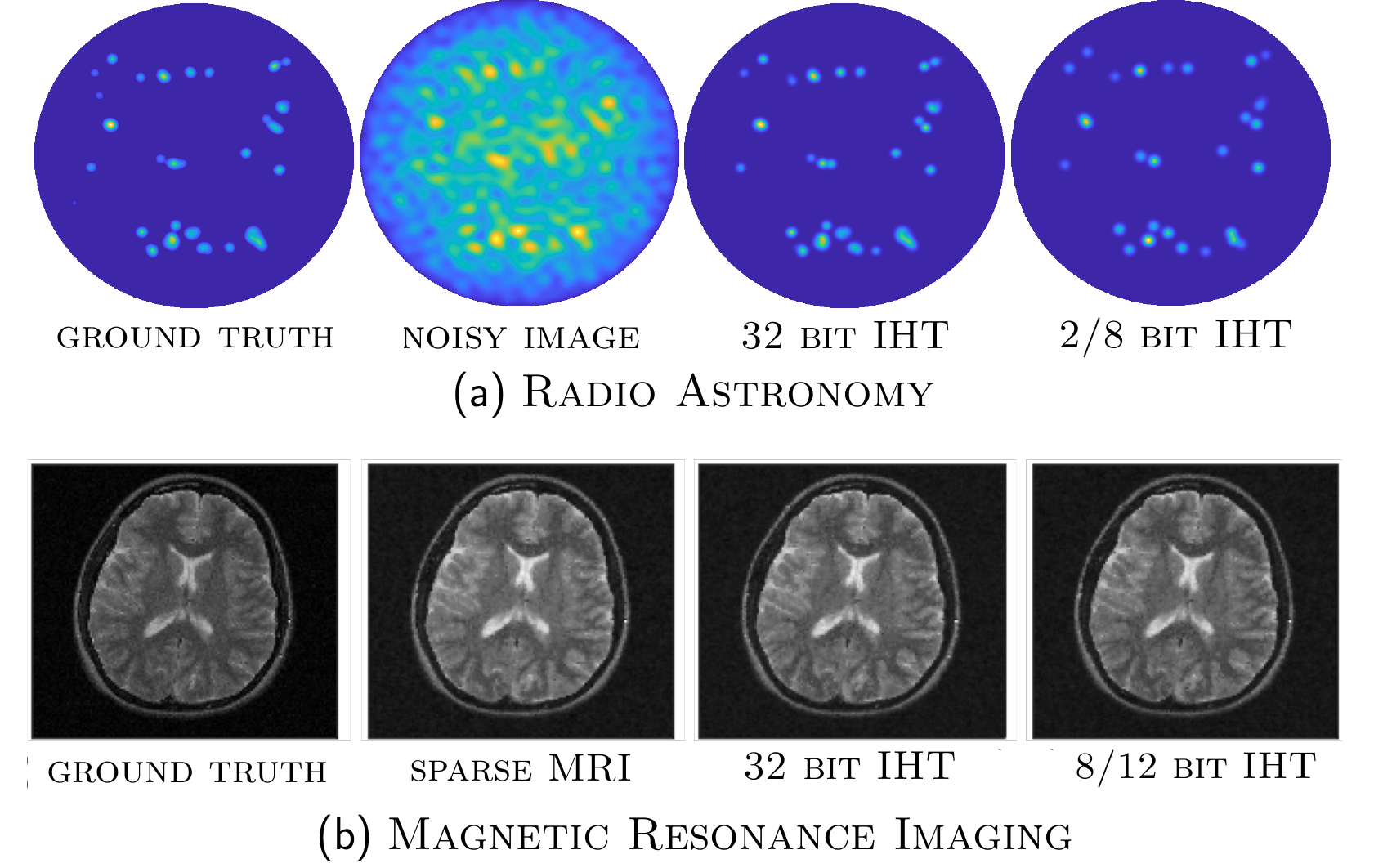}
  \caption{Illustration of the main results for the radio astronomy and the magnetic resonance imaging applications. When representing all input data with low precision, {IHT} achieves a negligible loss of recovery quality on the data recorded by (a) {LOFAR} station {CS302} with {2}-bit measurement matrix and {8}-bit observation, (b) subsampling $k$-space measurements (the 2D Fourier transform) of MRI images with {8}-bit measurement matrix and {12}-bit observation.}\label{summary_of_results}
\label{fig:result_highlight}
\end{figure}

We consider the sparse signal recovery problem in (\ref{CS_model}) described as: given ${\bf y}$ and ${\bf \Phi}$, find ${\bf x}$ minimizing the cost function
 \begin{equation}\label{cost_function}
 \small
     \|{\bf y} - {\bf \Phi x} \|^2 \ \ \rm{subject \ to \ \|{\bf x} \|_0 \leq s},
 \end{equation}
where $\|{\bf x} \|_0 = |{\supp}({\bf x})| = |\{i: x_i\neq 0 \}|$ and $s$ is number of sparse coefficients we want to recover.

\paragraph*{Normalized IHT}
Normalized IHT~\cite{blumensath2010niht} is an iterative solver of the optimization problem in (\ref{cost_function}) that is shown to outperform other methods such as traditional IHT and CoSaMP when the non-symmetric RIP condition holds. It uses the following update rule:
\begin{equation}\label{Eq:NIHT}
     {\bf x}^{[n+1]} = H_s({\bf x}^{[n]} + \mu^{[n]}{\bf \Phi}^{T}({\bf y} - {\bf \Phi}{\bf x}^{[n]})),
\end{equation}
where ${\bf x}^{[0]} = 0$ and $\mu^{[n]}>0$ is the adaptive step size parameter, $H_s({\bf x})$ is a nonlinear operator preserving only the largest $s$ entries of {\bf x} in magnitude, setting the other entries to zero.

If ${\bf x}$ has no more than $s$ nonzero elements,
the proposed update rule converges to a 
local minimum of the cost function $\|{\bf y} - {\bf \Phi x} \|^2$.
Furthermore, if the measurement matrix $\boldsymbol{\Phi}$ satisfies the non-symmetric RIP condition, normalized IHT is guarantees stability and performance,i.e., the result is near-optimal. The properties of normalized IHT are discussed in greater detail in Section~\ref{Sec:Prelimiaries}.

\paragraph*{Our Setting}
In this paper, we consider the properties of the
normalized IHT algorithm in a lossy compression setting, where both the data ${\bf y}$ and ${\bf \Phi}$ consist of floating-point values and undergo a stochastic quantization process to a small set of discrete levels, using a transformation operator. We denote the transformation operator by $Q(\cdot, b)$ where $b$ is the bit precision used by the representation. 
The goal of applying $Q(\cdot, b)$ is to reduce the high cost of data transmission between the sensor or storage and the computational device ({CPU}, {GPU}, or {FPGA}). We thus want to recover ${\bf x}$ using the  modified normalized {IHT} update rule
 \begin{equation*}
     {\bf x}^{[n+1]} = H_s({\bf x}^{[n]} + \mu^{[n]}Q({\bf \Phi}, b_{\bf \Phi})^{T}(Q({\bf y}, b_{\bf y}) - Q({\bf \Phi}, b_{\bf \Phi}){\bf x}^{[n]})).
 \end{equation*} 

\subsection{Related work}\label{Sec:RelatedWork}
Several studies have applied quantization in compressive sensing problems (Table~\ref{tab:cs}). They explore binary measurements for sparse signal recovery under different assumptions on the measurement matrix. Sparse signal recovery with a scale factor when measurements preserve only sign information was demonstrated in~\cite{boufounos20091bitcs}. Further, approximately sparse signals can be robustly recovered from single-bit measurements when sampled with a sub-Gaussian distribution~\cite{ai20121bitcs, davenport20121bit}. A similar setting is studied in~\cite{jacques20111bit, laska20111bitcs} with a Gaussian measurement matrix (Binary {IHT})~\cite{plan20111bitcs, plan20121bitcs}, which proposes a computationally tractable and optimal recovery of a 1-bit compressive sensing problem. The theoretical guarantees to recover the \textit{support} of high-dimensional sparse signals from 1-bit measurements are provided by~\cite{recht20121bitcs, gopi20131bitcs}. 

This paper differs from prior work in two main ways. First, our assumption that the measurement matrix is non-symmetric RIP is critical in real-life applications, 
and none of the assumptions made in prior work would fit this use case. Second, to the best of our knowledge, we are the only work besides~\cite{gopi20131bitcs} that quantizes
{\em both} the measurement matrix $\boldsymbol{\Phi}$ and the observation vector ${\bf y}$.
The problem of building a binary measurement matrix that can provide good recovery guarantees is considered in~\cite{gopi20131bitcs} given only one-bit measurements. By contrast, we consider a practical setting where we must quantize a given full-precision measurement matrix as well as possible, and thus can trade off higher precision for better recovery guarantees. 

Another emerging line of work has been on low precision training for 
machine learning applications beyond compressive
sensing. Examples include~\cite{desa2015hogwild, alistarh2016qsgd, zhang2017zipml} and
the line of work on 
partial or end-to-end low-precision training 
of deep networks~\cite{seide2014sgd1bit, hubara2016qsnn, rastegari2016binarycnn,zhou2016cnn, miyashita2016cnn, li2016twn, gupta2015dl}.
These references focus on quantization in the context of stochastic gradient descent ({SGD}). {IHT} and related methods can be seen as projected gradient methods, but existing results focus mostly on the variance added by quantization, and do not apply to recovery properties in the sparse case, which is our focus. 

There has been significant research on 
designing efficient algorithms for sparse recovery~\cite{blumensath2011aiht, wei2015fiht, blanchard2013iht, cevher2011ht, liu2017dualiht}.
We focus here on normalized {IHT}, and leave extensions to other methods as future work.
We further note the work on recovery using sparse binary matrices (see~\cite{gilbert2010sparse} for a survey). 
These matrix constructions could be applied in our scenario in some cases, as they are pre-quantized with similar guarantees. However, in  certain applications such as the radio astronomy and magnetic resonance imaging considered here, the measurement matrix is fixed and highly dense.

\section{Background: Normalized Iterative Hard Thresholding}
In this section, we review existing results on the normalized IHT algorithm~\cite{blumensath2010niht, blumensath2012greedy} (see Algorithm~\ref{Algorithm: Normalized_IHT}). These can be generalized to the traditional {IHT} if the measurement matrix satisfies $\|\boldsymbol{\Phi} \| < 1$~\cite{blumensath2009iht, blumensath2008iht}. 
\subsection{The Algorithm}\label{Sec:Prelimiaries}

\begin{figure}
\begin{algorithm}[H]
\SetAlgoLined
 {\bf Input:} Measurement matrix $\boldsymbol{\Phi}$, measurements ${\bf y}$, sparsity parameter $s$, number of iterations $n^*$, step size tuning parameters {\it k}, {\it c}
 
{\bf Output:} The recovery vector ${\bf x}^{[n^*]}$

{Initialize} ${\bf x}^{[0]} = 0$, $\Gamma^{[0]} = \supp\big (H_s({\bf \Phi}^{T}{{\bf y}})\big)$.

 \For{$n=1$ {\bf to} ${n}^*$}{
 ${\bf g}^{[n-1]} = {{\bf \Phi}^{T}} \big({{\bf y}}-{{\bf \Phi}}{\bf x}^{[n-1]}\big)$\;
 
 \vspace{0.15em}
 
 {\small $\boldsymbol{\mu}^{[n-1]} ={\big({\bf g}^{[n-1]}_{\Gamma^{[n-1]}}\big)^T{\bf g}^{[n-1]}_{\Gamma^{[n-1]}}}/{\big ( \boldsymbol{\Phi}_{\Gamma^{[n-1]}}{\bf g}^{[n-1]}_{\Gamma^{[n-1]}}\big)^T\boldsymbol{\Phi}_{\Gamma^{[n-1]}}{\bf g}^{[n-1]}_{\Gamma^{[n-1]}}}$}
 
 \vspace{0.25em}
 
 ${\bf x}^{[n]} = H_s({\bf x}^{[n-1]} + {\bf \mu}^{[n-1]} {\bf g}^{[n-1]})$
 
 \vspace{0.25em}
 
 $\Gamma^{[n]} = \supp({\bf x}^{[n]})$
 
 \vspace{0.15em}

\eIf{$\Gamma^{[n]}=\Gamma^{[n-1]}$}{

\vspace{0.15em}

${\bf x}^{[n]}={\bf x}^{[n-1]}$

}
{
$b^{[n]}=(\|{\bf x}^{[n]}-{\bf x}^{[n-1]} \|^2_2)/(\|\boldsymbol{\Phi}({\bf x}^{[n]}-{\bf x}^{[n-1]})\|^2_2)$

\vspace{0.15em}

\eIf{${\mu}^{[n]}\leq (1-{c})b^{[n]}$}{${\bf x}^{[n]}={\bf x}^{[n-1]}$}{

\vspace{0.25em}

\While{${\mu}^{[n]}> (1-{c})b^{[n]}$}{${\mu}^{[n]} = {\mu}^{[n]}/(k(1-c))$

\vspace{0.25em}

${\bf x}^{[n]} = H_s({\bf x}^{[n-1]} + \boldsymbol{\mu}^{[n-1]} {\bf g}^{[n-1]})$

\vspace{0.25em}

}
}
} 
 $\Gamma^{[n]} = \supp({\bf x}^{[n]})$\;
}
\caption{Normalized IHT}
\label{Algorithm: Normalized_IHT}
\end{algorithm}
\end{figure}

Let ${\bf x}^{[0]}=0$. As introduced in~(\ref{Eq:NIHT}), normalized IHT has the following update rule: 
$$
{\bf x}^{[n+1]} = H_s({\bf x}^{[n]} + \mu^{[n]}{\bf \Phi}^{T}({\bf y} - {\bf \Phi}{\bf x}^{[n]})),
$$ 
where $H_s({\bf x})$ is the thresholding operator that preserves the largest $s$  entries (in magnitude), and $\mu^{[n]}>0$ is an adaptive step size parameter.
The recovery performance of Normalized {IHT} depends conditionally on the step size parameter $\mu^{[n]}$, unlike the traditional {IHT} approach in which $\mu^{[n]}=1$.
While the traditional approach requires a re-scaling of the measurement matrix such that $\|{\bf \Phi}\| <1$ to ensure convergence, introducing a step size parameter that enables the arbitrary scaling of ${\bf \Phi}$, and hence relaxes the bounds on its norm. Specifically, the role of $\mu^{[n]}$ is to compensate for this rescaling by avoiding the undesirable amplification of noise, i.e., by keeping the ratio $\|{\bf \Phi x}\|/\|{\bf e}\|$ unchanged. 

\paragraph*{Step size determination}
Normalized IHT adaptively sets the step size as follows: if the support of ${\bf x}^{[n]}$ is preserved between iterations, one can set the step size adaptively to 
\begin{equation}\label{step_size}
{\small
     \mu^{[n]} = \frac{\big({\bf g}^{[n]}_{\Gamma^{[n]}}\big)^T{\bf g}^{[n]}_{\Gamma^{[n]}}}{\big({\bf g}^{[n]}_{\Gamma^{[n]}}\big)^T{\bf \Phi}^{T}_{\Gamma^{[n]}}{\bf \Phi}_{\Gamma^{[n]}}{\bf g}^{[n]}_{\Gamma^{[n]}}},}
\end{equation}
where ${\bf g}^{[n]} = {\bf \Phi}^{T}({\bf y}-{\bf \Phi x}^{[n]})$ and $\Gamma^{[n]} = \supp({\bf x}^{[n]})$. This is shown to result in the maximal reduction of the cost function. However, if the support of ${\bf x}^{[n+1]}$ differs from that of ${\bf x}^{[n]}$, a sufficient convergence condition is shown to be
\begin{equation*}
    \mu^{[n]} \leq(1-c) \frac{\|{\bf x}^{[n+1]}-{\bf x}^{[n]} \|^2}{\|{\bf \Phi}({\bf x}^{[n+1]}-{\bf x}^{[n]}) \|^2}
\end{equation*}
for any small constant $c$. If the above condition is not met, a new proposal for ${\bf x}^{[n+1]}$ can be calculated by setting $\mu^{[n]}\leftarrow{\mu^{[n]}/(k(1-c))}$, where $k$ is a shrinkage parameter satisfying $k>1/(1-c)$.

A detailed description of Normalized IHT is given in Algorithm~\ref{Algorithm: Normalized_IHT}.

\subsection{Recovery Guarantees}

The analysis of hard thresholding algorithms  relies on the scaling properties of ${\bf \Phi}$. Concretely, one often considers the non-symmetric Restricted Isometry Property ({RIP}) condition: a matrix ${\bf \Phi}$ satisfies the non-symmetric {RIP} if there are $0<\alpha_s, \beta_s \in \mathbb{R}$ and $\alpha_s\leq \beta_s$ such that
\begin{equation}\label{eqn:rip}
\small
\alpha_{s} \leq \frac{\|{\bf \Phi} {\bf x}\|}{\|{\bf x}\|} \leq \beta_{s} { \ \textrm{for all} \ {\bf x} \ \textrm{with} \ \|{\bf x}\|_0 \leq s.}
\end{equation}
$\alpha_s$ and $\beta_s$ are the so-called Restricted Isometric Constants ({RIC}s). Note that for any support set $\Gamma$ such that $|\Gamma| \leq s$, $\alpha_s$ and $\beta_s$ are lower and upper bounded by the smallest and largest singular values of ${\bf \Phi}_{|\Gamma|}$, respectively.

The main convergence result of normalized {IHT} can be stated as follows~\cite{blumensath2012greedy}.

\begin{theorem} \label{theorem_convergence_IHT}
Let ${\bf \Phi}$ be full rank and $s\leq m$. If $\beta_{2s}\leq \mu^{-1}$, then normalized {IHT} converges to a local minimum of~(\ref{cost_function}).
\end{theorem}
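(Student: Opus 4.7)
The plan is to follow the standard majorization--minimization strategy used for hard-thresholding schemes. Define the surrogate
\[
G_\mu(\mathbf{x},\mathbf{z}) \;=\; \|\mathbf{y}-\mathbf{\Phi}\mathbf{x}\|_2^2 \;-\; \|\mathbf{\Phi}(\mathbf{x}-\mathbf{z})\|_2^2 \;+\; \mu^{-1}\|\mathbf{x}-\mathbf{z}\|_2^2 .
\]
If both $\mathbf{x}$ and $\mathbf{z}$ are $s$-sparse, then $\mathbf{x}-\mathbf{z}$ is at most $2s$-sparse, and the upper RIC in Eqn.~(\ref{rip}) yields $\|\mathbf{\Phi}(\mathbf{x}-\mathbf{z})\|_2 \leq \beta_{2s}\|\mathbf{x}-\mathbf{z}\|_2$. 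Combined with the hypothesis $\beta_{2s}\leq \mu^{-1}$, this gives $G_\mu(\mathbf{x},\mathbf{z}) \geq \|\mathbf{y}-\mathbf{\Phi}\mathbf{x}\|_2^2 = F(\mathbf{x})$, with equality at $\mathbf{x}=\mathbf{z}$. So $G_\mu(\cdot,\mathbf{z})$ is a valid majorizer of $F$ on the constraint set $\{\|\mathbf{x}\|_0\leq s\}$ at each sparse anchor $\mathbf{z}$.

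Next I would show that the NIHT update coincides with $s$-sparse minimization of this surrogate. Completing the square, $G_\mu(\mathbf{x},\mathbf{z})$ equals $\mu^{-1}\|\mathbf{x}-(\mathbf{z}+\mu\mathbf{\Phi}^\dagger(\mathbf{y}-\mathbf{\Phi}\mathbf{z}))\|_2^2$ plus a term independent of $\mathbf{x}$. Because this is coordinate-separable, the minimizer over $\{\|\mathbf{x}\|_0 \leq s\}$ is obtained by keeping the $s$ largest entries (in magnitude) of the gradient-step point, i.e.\ precisely $\mathbf{x}^{[n+1]} = H_s(\mathbf{x}^{[n]}+\mu^{[n]}\mathbf{\Phi}^\dagger(\mathbf{y}-\mathbf{\Phi}\mathbf{x}^{[n]}))$ from Eqn.~(\ref{iht_update_rule}). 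Chaining the two facts delivers monotone descent,
\[
F(\mathbf{x}^{[n+1]}) \leq G_\mu(\mathbf{x}^{[n+1]},\mathbf{x}^{[n]}) \leq G_\mu(\mathbf{x}^{[n]},\mathbf{x}^{[n]}) = F(\mathbf{x}^{[n]}),
\]
and since $F\geq 0$, the cost sequence converges and is Cauchy.

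To upgrade cost convergence to convergence of the iterates, I would isolate the descent margin: rearranging the inequality above gives $F(\mathbf{x}^{[n]})-F(\mathbf{x}^{[n+1]}) \geq (\mu^{-1}-\beta_{2s})\,\|\mathbf{x}^{[n+1]}-\mathbf{x}^{[n]}\|_2^2$ whenever $\beta_{2s}<\mu^{-1}$ strictly (the boundary case $\beta_{2s}=\mu^{-1}$ requires the NIHT step-size rule itself to enforce a strict contraction, and I would invoke the adaptive choice of $\mu^{[n]}$ sketched in Remark~\ref{remark_noise_amp}). Summing over $n$, $\sum_n \|\mathbf{x}^{[n+1]}-\mathbf{x}^{[n]}\|_2^2 < \infty$, so successive iterates become arbitrarily close. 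Since there are only $\binom{N}{s}$ possible supports, and $\mathbf{x}^{[n]}$ is confined to a bounded sublevel set of $F$ (using the full-rank hypothesis and $\alpha_s>0$ to ensure coercivity on $s$-sparse vectors), the support must stabilize after finitely many steps. On the resulting fixed support $\Gamma^\star$, NIHT reduces to an unconstrained quadratic gradient step for $\mathbf{x}_{\Gamma^\star} \mapsto \|\mathbf{y}-\mathbf{\Phi}_{\Gamma^\star}\mathbf{x}_{\Gamma^\star}\|_2^2$, whose unique minimizer is a local minimum of the combinatorial objective in Eqn.~(\ref{cost_function}).

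The main obstacle I expect is the last step: transitioning from monotone cost descent to identifying the limit as a local minimum of the $\ell_0$-constrained problem. The discontinuity of $H_s$ at ties between the $s$-th and $(s{+}1)$-st entries can cause the support to flicker, so the support-stabilization argument must either rule out persistent flickering (using the strict-descent margin to show flickering is inconsistent with a Cauchy sequence) or be phrased as convergence of every limit point of $\{\mathbf{x}^{[n]}\}$ being a local minimum in the sense of Eqn.~(\ref{cost_function}); this is exactly the delicate point that makes the proof in \cite{blumensath2012greedy} nontrivial and that I would borrow from directly.
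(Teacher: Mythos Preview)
The paper does not supply its own proof of this theorem; it is quoted verbatim as a known result from \cite{blumensath2012greedy}. Your majorization--minimization argument is precisely the standard technique Blumensath uses, and it matches the surrogate the present paper later writes down (Section~\ref{section_lpiht}) for the quantized variant. So there is nothing to compare against in this paper, and your route is the canonical one.

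One technical slip worth fixing: with the non-squared RIP of Eqn.~(\ref{rip}), the bound $\|\mathbf{\Phi}(\mathbf{x}-\mathbf{z})\|_2 \le \beta_{2s}\|\mathbf{x}-\mathbf{z}\|_2$ gives $\|\mathbf{\Phi}(\mathbf{x}-\mathbf{z})\|_2^2 \le \beta_{2s}^{2}\|\mathbf{x}-\mathbf{z}\|_2^2$, so the majorization $G_\mu(\mathbf{x},\mathbf{z}) \ge F(\mathbf{x})$ actually needs $\beta_{2s}^{2} \le \mu^{-1}$, not $\beta_{2s} \le \mu^{-1}$. The paper itself phrases the condition as $\|\mu^{1/2}\hat{\mathbf{\Phi}}\|_2^2 < 1$ in Section~\ref{section_lpiht}, which is the squared version; the linear form in the theorem statement appears to be a transcription issue rather than a flaw in your reasoning, but your write-up should carry the square explicitly. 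Your identification of the delicate point---ruling out support flickering to pass from monotone descent to convergence to a local minimizer---is accurate, and deferring that step to \cite{blumensath2012greedy} is exactly what the paper does.
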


When setting the step size parameter, the condition $\beta_{2s}\leq\mu^{-1}$, which ensures convergence, poses a challenge. To date, 
there is no efficient strategy to determine the exact values of the {RIC}s ${\beta_s}$ and $\alpha_s$ for an arbitrary measurement matrix in a computationally efficient manner. However, these constants can be bounded efficiently, and it can be shown that randomly constructed measurement matrices can satisfy the {RIP} with high probability~\cite{candes2008rip, chartrand2008rip}.

The adaptive setting of the step size parameter is further shown to provide a non-symmetric RIP variant recovery result as follows~\cite{blumensath2010niht}.

\begin{theorem}\label{guarantee_niht}
Consider a noisy observation ${\bf y} = {\bf \Phi x} + {\bf e}$ with an arbitrary vector ${\bf x}$, and let ${\bf x}^s$ be the best $s$-term approximation of ${\bf x}$. If $\rank({{\bf \Phi}})=M$ and $\rank({{\bf \Phi}_{\Gamma}})=s$ for all $\Gamma$ with $|\Gamma|=s$, then the normalized {IHT} algorithm converges to a local minimum of the cost function in~(\ref{cost_function}). Also, assume ${\bf \Phi}$ has the non-symmetric {RIP} when projected onto $2s$-sparse vectors, with RICs $\alpha_{2s}$ and $\beta_{2s}$. 

We further define $\gamma_{2s} =\beta_{2s}/\alpha_{2s}-1$ if the normalized IHT algorithm uses the step size defined in (\ref{step_size}) at each iteration, and $\gamma_{2s} =\max (1-\alpha_{2s}/k\beta_{2s},\ \beta_{2s}/\alpha_{2s}-1)$ otherwise, where $k>1$ is a shrinkage parameter introduced earlier. If $\gamma_{2s} \leq {1}/{8}$, then the recovery error after $n$ iterations is bounded as
\begin{equation}
    \| {\bf x}-{\bf x}^{[n]}\| \leq 2^{-n}\| {\bf x}^s\| + 8 {{\bf \epsilon}_s}\label{NIHT_error_bound},
\end{equation} where 
\begin{equation}\label{epsilon_s}
     {{\bf \epsilon}_s} =  \| {\bf x}-{\bf x}^s\| + \frac{\| {\bf x}-{\bf x}^s\|_1}{\sqrt{s}}+\frac{1}{\beta_{2s}}\|{\bf e} \|.
\end{equation}
\end{theorem}
\begin{corollary}
After at most $n^*=\log_2(\|{\bf x}^s\|/ {\bf \epsilon}_s)$ iterations, the recovery error bound in~(\ref{NIHT_error_bound}) can be further simplified to
\begin{equation*}
   \| {\bf x}-{\bf x}^{[n]}\| \leq 9 {{\bf \epsilon}_s}.
\end{equation*}
\end{corollary}
The above result suggests that, after a sufficiently large number of iterations, the reconstruction error is induced only by the noise ${\bf e}$ and that ${\bf x}$ is not exactly $s$-sparse.

\section{{QIHT}: Low Precision Iterative Thresholding}\label{Sec:QIHT}

We will now introduce the quantized version of normalized {IHT}, called {QIHT}, and analyze it in terms of signal recovery performance. 
The key idea here is that, by reducing the bit widths of the data points in a structured manner, we can upper bound the recovery error and fine tune the bit precision to still guarantee provable recovery performance. In Section~\ref{Sec:NumericalExperiments},
we will show that the recovery error bound reflects the true scaling of parameters in the regime where the non-symmetric RIP holds, and that for specific applications,
in particular radio astronomy and magnetic resonance imaging,
we expect the recovery error to be small, thanks to
the structure of the measurement matrix.

\subsection{The Algorithm}

Recall the quantized IHT iteration, assuming ${\bf x}^{[0]} = 0$: 
\begin{multline}\label{modified_update_rule}
  {\bf x}^{[n+1]} = \\
  H_s\big({\bf x}^{[n]}+\hat{\mu}^{[n]} Q(\boldsymbol{\Phi}^{T}, b_{\boldsymbol{\Phi}})(Q({\bf y}, b_{\boldsymbol{y}})-Q(\boldsymbol{\Phi}, b_{\boldsymbol{\Phi}}){\bf x}^{[n]})\big), 
\end{multline}
where the step size $\hat{\mu}^{[n]}$ is determined based on~(\ref{step_size}), and $Q(\cdot, b)$ is an element-wise quantization operator that maps single-precision floating-point values to $b$-bit precision. 

In the following, we will use the stochastic quantization operator
$Q(v, b)$, which quantizes ${v}$ to $b$-bit precision as follows. 
Let $\ell=2^b$ and $q_1, \dots, q_\ell$ denote $\ell$ equally spaced points in $[-1, 1]$ such that $q_1=-1\leq q_2 \leq \dots \leq q_\ell=1$. Assume that $v \in [q_i, q_{i+1}]$ for some $i$. Stochastic quantization maps $v$ to one of the two nearest points as follows:
$$    
Q(v, b) = \begin{cases}
        q_i, & \textrm{with probability} \ \frac{q_{i+1}-v}{q_{i+1}-q_i},\\
        q_{i+1},&\textrm{otherwise}.  \ \ \ \  \ \ \ \ \ \ \ \ \ \ \ \ \ \ 
        \end{cases} 
$$
Note that the quantization $Q(\cdot, b)$ is unbiased, i.e., $\mathbb{E}[Q({v}, b)] = {v}$, and matrices and vectors are quantized element-wise.

Note that in~(\ref{modified_update_rule}), we use two independent stochastic quantizations for ${\bf \Phi}^T$ and ${\bf \Phi}$, the so-called double sampling~\cite{zhang2017zipml}. This leads to an unbiased gradient estimator, i.e., 
\[
\mathbb{E}\big[ Q(\boldsymbol{\Phi}^{T}, b_{\boldsymbol{\Phi}})(Q({\bf y}, b_{\boldsymbol{y}})-Q(\boldsymbol{\Phi}, b_{\boldsymbol{\Phi}}){\bf x}^{[n]}\big)\big] = \boldsymbol{\Phi}^{T}({\bf y}-\boldsymbol{\Phi}{\bf x}^{[n]}),
\]
which provides better convergence results.

A detailed description of our quantized IHT is given in Algorithm~\ref{Algorithm:QIHT}. 

\begin{figure}
\begin{algorithm}[H]
{ 
\SetAlgoLined
{\bf Input:} {number of iterations $n^*$, $2n^*$ realizations of the low precision measurement matrix $Q({\bf \Phi})$: $\hat{\bf \Phi}_1, \hat{\bf \Phi}_2, \dots,\hat{\bf \Phi}_{2n^*}$, $n^*$ realizations of the low precision observation vector $Q({\bf y})$: $\hat{\bf y}_1, \hat{\bf y}_2, \dots,\hat{\bf y}_{n^*}$, sparsity parameter $s$, step size tuning parameters {\it k,\ c}\;}

{\bf Output:} The recovery vector ${\bf x}^{[n^*]}$

{Initialize} ${\bf x}^{[0]} = 0$, $\Gamma^{[0]} = \supp\big (H_s(\hat{\bf \Phi}_1^{T}{\hat{\bf y}})\big)$.

 \For{{$n=1$} {\bf to} {${n}^*$}}{
 ${\bf g}^{[n-1]} = {\hat{\bf \Phi}^{T}}_{2n-1} \big(\hat{{\bf y}}-{\hat{\bf \Phi}}_{2n}{\bf x}^{[n]}\big)$\;
 
 \vspace{0.25em}
 
 $\hat{\bf \mu}^{[n-1]} =...$\;
 
  {\scriptsize${\big({\bf g}^{[n-1]}_{\Gamma^{[n-1]}}\big)^T{\bf g}^{[n-1]}_{\Gamma^{[n-1]}}}/{\big( ({\bf \Phi}_{2n-1})_{\Gamma^{[n-1]}}{\bf g}^{[n-1]}_{\Gamma^{[n-1]}}}\big )^T({\bf \Phi}_{2n})_{\Gamma^{[n-1]}}{\bf g}^{[n-1]}_{\Gamma^{[n-1]}}$}\;
  
  \vspace{0.25em}
 
 ${\bf x}^{[n]} = H_s({\bf x}^{[n-1]} + \hat{\bf \mu}^{[n-1]} {\bf g}^{[n-1]})$\;
 
 \vspace{0.25em}
 
 $\Gamma^{[n]} = \supp({\bf x}^{[n]})$\;
 \vspace{0.25em}

\eIf{$\Gamma^{[n]}=\Gamma^{[n-1]}$}{

\vspace{0.15em}

${\bf x}^{[n]}={\bf x}^{[n-1]}$\;

}
{
${ b}^{[n]}=(\|{\bf x}^{[n]}-{\bf x}^{[n-1]} \|^2_2)/(\|\hat{\bf \Phi}_{2n-1}({\bf x}^{[n]}-{\bf x}^{[n-1]})\|^2_2)$\;

\vspace{0.25em}

\eIf{$\hat{\mu}^{[n]}\leq (1-{c})b^{[n]}$}{${\bf x}^{[n]}={\bf x}^{[n-1]}$}{

\vspace{0.25em}

\While{$\hat{\mu}^{[n]}> (1-{c})b^{[n]}$}{$\hat{\mu}^{[n]} = \hat{\mu}^{[n]}/(k(1-c))$\;

\vspace{0.25em}

${\bf x}^{[n]} = H_s({\bf x}^{[n-1]} + \hat{\bf \mu}^{[n-1]} {\bf g}^{[n-1]})$\;

\vspace{0.25em}
}
}
} 
 $\Gamma^{[n]} = \supp({\bf x}^{[n]})$\;
}
 \caption{{QIHT}: Low Precision {IHT}}
 \label{Algorithm:QIHT}
  }
\end{algorithm}
\end{figure}

\subsection{Main Results}

\paragraph*{Convergence}
We start with stating the convergence result. In the following, we set 
$\hat{{\bf \Phi}}= Q({\bf \Phi}, b_{\bf \Phi})$ and $\hat{{\bf y}}= Q({\bf y}, b_{\bf y})$.
We also use $\hat{{\bf \Phi}}_j$ to denote the $j^{th}$ quantization $\hat{{\bf \Phi}}$.

\begin{theorem}
The QIHT algorithm attains a local minimum of the cost function $\mathbb{E} [ \| \hat{\bf y} - \hat{\bf \Phi}{\bf x}\|^2]$ such that $\|{\bf x}\|_0 \leq s$. 
\end{theorem}
\begin{proof}
$\mathbb{E} [ \| \hat{\bf y} - \hat{\bf \Phi}{\bf x}\|^2]$ can be majorized by the following surrogate objective function 
\begin{equation*}
\small
    \begin{split}
        \mathbb{E} [\|\mu^{0.5}\hat{\bf y} -   \hat{\bf \Phi}{\bf x}\|^2+ \|{\bf x} 
    - {\bf x}^{[n]} \|^2
    - \|\mu^{0.5}\hat{\bf \Phi}({\bf x}- {\bf x}^{[n]}) \|^2], 
    \end{split}
\end{equation*}
whenever $\| \mu^{0.5}\hat{\bf \Phi}\|^2 <1$. This condition is met due to the step size determination introduced in~(\ref{step_size}). The minimizer of the above surrogate objective ${\bf x}^{[n+1]}$, therefore, ensures that $\mathbb{E} [\| \hat{\bf y} - \hat{\bf \Phi}{\bf x}^{[n+1]}\|^2] \leq \mathbb{E}[ \| \hat{\bf y} - \hat{\bf \Phi}{\bf x}^{[n]}\|^2]$. Using the arguments of~\cite{blumensath2008iht}, ~(\ref{modified_update_rule}) can be shown to minimize the expected cost $\mathbb{E} [ \| \hat{\bf y} - \hat{\bf \Phi}{\bf x}\|^2]$. 
\end{proof}

\subsubsection*{Performance Guarantees}

The following theorem states our main analytic result, which characterizes the recovery error of QIHT, specifically focusing on the additional error introduced by the quantization procedure.

\begin{theorem}\label{main_theorem_TH}
Consider an $M$-dimensional noisy observation vector ${\bf y} = {\bf \Phi x}+{\bf e}$ where ${\bf \Phi}$ is an $M\times N$-dimensional real or complex matrix, and ${\bf x}$ is an $N$-dimensional arbitrary vector. Let $H_s({\bf x}) = {\bf x}^s$ with $s\leq M$ and assume the full precision measurement matrix ${\bf \Phi}$ and the quantized measurement matrix $\hat{\bf \Phi}$ satisfy the non-symmetric RIP in~(\ref{eqn:rip}) and (\ref{Eq:rip_lowprecision}), with RICs ${\alpha}_s, {\beta}_s$ and $\hat{\alpha}_s, \hat{\beta}_s$, respectively. We also define $\gamma_{2s} =\beta_{2s}/\alpha_{2s}-1$ if the normalized IHT algorithm uses the step size defined in (\ref{step_size}) at each iteration and $\gamma_{2s} =\max (1-\alpha_{2s}/k\beta_{2s},\ \beta_{2s}/\alpha_{2s}-1)$ otherwise. Similarly, let $\hat{\gamma}_{2s} =\hat{\beta}_{2s}/\hat{\alpha}_{2s}-1$ if the QIHT algorithm uses the step size defined in (\ref{step_size}) and $\hat{\gamma}_{2s} =\max (1-\hat{\alpha}_{2s}/k\hat{\beta}_{2s},\ \hat{\beta}_{2s}/\hat{\alpha}_{2s}-1)$ otherwise. If ${\gamma}_{2s}$ and $\hat{\gamma}_{2s}$ satisfy ${\gamma}_{2s}\leq 1/24$ and $\hat{\gamma}_{2s}\leq 1/24$, then at each iteration $n$, the QIHT algorithm outputs an approximation of ${\bf x}^s$, ${\bf x}^{[n]}$ such that
\begin{equation}\label{main_theorem_paper}
        \mathbb{E}[\|\hat{{\bf x}}^{[n]}-{\bf x}^s\|]  
        \leq 2^{-n}\|{\bf x}^{s}\|  + 9 {\epsilon}_s + 4.5 {\epsilon}_q,
\end{equation}
where $\epsilon_s$ is given by
$$ 
{{\bf \epsilon}_s} =  \| {\bf x}-{\bf x}^s\| + \frac{\| {\bf x}-{\bf x}^s\|_1}{\sqrt{s}}+\frac{1}{\min(\beta_{2s}, \hat{\beta}_{2s})}\|{\bf e} \|
$$
and
\begin{equation*}
 {\epsilon}_q = \frac{\sqrt{M}}{\hat{\beta}_{2s}}\Big ( \frac{\| {\bf x}^s\|}{2^{b_{\bf \Phi}-1}}+\frac{1}{2^{b_{\bf y}-1}} \Big ).
\end{equation*}
Here, {b}${}_{\bf \Phi}$ and {b}${}_{\bf y}$ are the number of bits used to represent ${\bf \Phi}$ and ${\bf y}$, respectively.
\end{theorem}
\begin{proof}
See Appendix~\ref{Sec:proof of main theorem}.
\end{proof}

\begin{corollary}
A natural stopping criterion is $n^* = \lceil \log_2 (\|{\bf x}^s \|/\epsilon_s)\rceil$, which means the algorithm computes successive approximations of ${\bf x}^s$ with accuracy $\mathbb{E}[\|{\bf x}^{[n^*]}-{\bf x}^s \|] \leq 10\epsilon_s + 4.5\epsilon_q$.
\end{corollary}
\begin{proof}
Inserting $n^* = \lceil \log_2 (\|{\bf x}^s \|/\epsilon_s)\rceil$ into the $2^{-n}\|{\bf x}^{s}\|$ term in~(\ref{main_theorem_paper}) yields the result.
\end{proof}

\paragraph*{Determining the bit precision $b$} 
One constraint in the above theorem is that both $\hat{\bf \Phi}$ and $\bf \Phi$ satisfy the non-symmetric RIP with $\gamma_{2s}, \hat{\gamma}_{2s}\leq 1/24$.
The following lemma describes the relationship between the non-symmetric {RIP} properties of
${\bf \Phi}$ and  $\hat{\bf \Phi}$. The result suggests that one can ensure that the non-symmetric {RIP} holds for $\hat{\bf \Phi}$ using sufficient bit precision. 

\begin{lemma}
Let $\epsilon > 0$ and let $\bf \Phi_{\Gamma}$ satisfy the non-symmetric {RIP} with $\gamma_{|\Gamma|} \leq 1/24-\epsilon$ for any support set $\Gamma$. If ${b}_{\bf \phi} \geq \log \big (\frac{2\sqrt{|\Gamma|}}{\epsilon \alpha_{|\Gamma|}}\big )$, then $\hat{\bf \Phi}_{\Gamma}$ is guaranteed to satisfy $\hat{\gamma}_{|\Gamma|}\leq 1/24$.
\end{lemma}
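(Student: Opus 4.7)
The plan is to treat the quantized matrix as a perturbation of the original, $\hat{\bf \Phi}_\Gamma = {\bf \Phi}_\Gamma + {\bf E}_\Gamma$, and then transfer the RIP of ${\bf \Phi}_\Gamma$ to $\hat{\bf \Phi}_\Gamma$ via singular-value perturbation. First I would observe that with $2^b$ equispaced levels on $[-1,1]$, the stochastic quantizer rounds each coordinate to one of its two nearest grid points, so every entry of ${\bf E}_\Gamma$ is deterministically bounded by the bin width, giving $|E_{m,n}| \leq 2^{-b}$ up to a constant. Applied column-wise with Cauchy--Schwarz, this yields a spectral-norm bound of the form $\|{\bf E}_\Gamma\|_2 \leq \sqrt{|\Gamma|}\cdot 2^{-b}$ on sparse inputs supported in $\Gamma$.

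Next I would invoke the hypothesis on the bit budget: the condition $b \geq \log\bigl(2\sqrt{|\Gamma|}/(\epsilon\, \alpha_{|\Gamma|})\bigr)$ rearranges to $2^{-b} \leq \epsilon\,\alpha_{|\Gamma|}/(2\sqrt{|\Gamma|})$, and combined with the previous step it gives the clean perturbation bound $\|{\bf E}_\Gamma\|_2 \leq \epsilon\,\alpha_{|\Gamma|}/2$. By the triangle inequality applied to $\|\hat{\bf \Phi}_\Gamma {\bf x}\|_2$ for every ${\bf x}$ supported on $\Gamma$, one obtains
\begin{equation*}
\hat{\alpha}_{|\Gamma|} \geq \alpha_{|\Gamma|} - \|{\bf E}_\Gamma\|_2, \qquad \hat{\beta}_{|\Gamma|} \leq \beta_{|\Gamma|} + \|{\bf E}_\Gamma\|_2.
\end{equation*}

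The last step is a direct calculation on $\hat{\gamma}_{|\Gamma|} = \hat{\beta}_{|\Gamma|}/\hat{\alpha}_{|\Gamma|} - 1$. Substituting the inequalities above,
\begin{equation*}
\hat{\gamma}_{|\Gamma|} \leq \frac{\beta_{|\Gamma|} + \|{\bf E}_\Gamma\|_2}{\alpha_{|\Gamma|} - \|{\bf E}_\Gamma\|_2} - 1 = \frac{(\beta_{|\Gamma|}-\alpha_{|\Gamma|}) + 2\|{\bf E}_\Gamma\|_2}{\alpha_{|\Gamma|} - \|{\bf E}_\Gamma\|_2}.
\end{equation*}
Using the assumption $\gamma_{|\Gamma|} \leq 1/16 - \epsilon$, the numerator is at most $\alpha_{|\Gamma|}(1/16-\epsilon) + 2\|{\bf E}_\Gamma\|_2$. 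I then plug in the bound $\|{\bf E}_\Gamma\|_2 \leq \epsilon\,\alpha_{|\Gamma|}/2$ in both the numerator and denominator and verify that the resulting rational expression is at most $1/16$ by elementary algebra.

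The main obstacle I expect is the first step: getting the spectral-norm perturbation estimate $\|{\bf E}_\Gamma\|_2 \leq \sqrt{|\Gamma|}\cdot 2^{-b}$ with the right dependence on $|\Gamma|$ rather than on both $M$ and $|\Gamma|$. A naive bound through the Frobenius norm would introduce an extra $\sqrt{M}$ factor; avoiding this requires exploiting the sparsity of ${\bf x}$ via $\|{\bf x}\|_1 \leq \sqrt{|\Gamma|}\|{\bf x}\|_2$ together with the row-wise $\ell_\infty$ bound on ${\bf E}$, or else a per-column normalization convention on ${\bf \Phi}$. Once that estimate is in hand, the remainder is a routine perturbation argument and a short algebraic verification of the $1/16$ threshold.
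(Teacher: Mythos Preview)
Your overall strategy---treat $\hat{\bf \Phi}_\Gamma = {\bf \Phi}_\Gamma + {\bf E}_\Gamma$ and transfer RIP via a singular-value perturbation---matches the paper's. But the execution diverges in a way that makes your last step fail.

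Carrying out your ``elementary algebra'' explicitly: with $\|{\bf E}_\Gamma\|_2 \leq \epsilon\,\alpha_{|\Gamma|}/2$ and $\beta_{|\Gamma|}-\alpha_{|\Gamma|} = \alpha_{|\Gamma|}\gamma_{|\Gamma|} \leq \alpha_{|\Gamma|}(1/16-\epsilon)$, the numerator is at most $\alpha_{|\Gamma|}/16$ while the denominator is $\alpha_{|\Gamma|}(1-\epsilon/2)$, so your bound reads
\[
\hat{\gamma}_{|\Gamma|} \;\leq\; \frac{1}{16\,(1-\epsilon/2)} \;>\; \frac{1}{16}
\]
for every $\epsilon>0$. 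The two-sided Weyl-type perturbation you use (lowering $\hat{\alpha}$ as well as raising $\hat{\beta}$) cannot hit the exact $1/16$ threshold with the stated bit budget.

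The paper gets around this by invoking a \emph{one-sided} perturbation result from the random-matrix literature (Stewart, Vaccaro): independent zero-mean noise on the entries lifts the singular values upward, so the smallest singular value does not decrease and only the largest shifts, by at most $\sigma_{\max}\sqrt{|\Gamma|}$ with $\sigma_{\max}=1/2^{b-1}$ the quantization-noise standard deviation. This yields directly
\[
\hat{\gamma}_{|\Gamma|} \;\leq\; \gamma_{|\Gamma|} + \frac{\sqrt{|\Gamma|}}{2^{b-1}\alpha_{|\Gamma|}},
\]
and the bit condition is exactly $\sqrt{|\Gamma|}/(2^{b-1}\alpha_{|\Gamma|})\leq \epsilon$, giving $\hat{\gamma}_{|\Gamma|}\leq 1/16$ on the nose. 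The same cited result is how the paper obtains the $\sqrt{|\Gamma|}$ scaling without the $\sqrt{M}$ factor you correctly flagged as an obstacle for your deterministic Cauchy--Schwarz argument; it uses the noise \emph{variance} rather than a worst-case entrywise bound. In short, the missing ingredient is that the paper's argument is probabilistic (via external references) and one-sided, not a deterministic Weyl bound.
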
 
\begin{proof}
See Appendix~\ref{Sec:Proofs}.
\end{proof}
\subsection{Comparison of QIHT and Normalized IHT: Discussion and Limitations}

We now examine the error bound provided by Theorem~\ref{main_theorem_TH}. We note that this bound is slightly simplified, i.e., our proof in the Appendix~\ref{Sec: proof of lemma} is tighter. From there, we conclude that the RIP condition is scaled by a small factor which lies in the interval $(2, 3)$ when the measurement matrix is quantized.

The {QIHT} algorithm is guaranteed to asymptotically provide a sparse approximation of ${\bf x}$ up to multiples of ${\epsilon}_s$ and ${\epsilon}_q$ in the noise term ${\bf e}$ when ${\gamma}_{2s}, \hat{\gamma}_{2s}\leq 1/12$, and with rate $2^{-n}$ when ${\gamma}_{2s}, \hat{\gamma}_{2s}\leq 1/24$. We refer to (\ref{final_error_residual}) for the details of the former. ${\epsilon}_s$ is the approximation error when ${\bf x}$ is represented by a sparse vector ${\bf x}^s$, and ${\epsilon}_q$ is the noise introduced by the quantization operator. 
Here we highlight the properties, and discuss the limitations.

\paragraph*{Condition on $\hat{\boldsymbol{\gamma}}_{2s}$} Compared to Normalized IHT, the condition under which the performance guarantee holds is stricter in our approach, i.e., ${\gamma}_{2s}, \hat{\gamma}_{2s}\leq 1/24$, whereas the standard analysis requires ${\gamma}_{2s}\leq 1/8$ (Theorem~\ref{guarantee_niht}) for the same rate of convergence. Although it is hard to meet this constraint in practice, the small scaling factor between the convergence rates suggests that we can still expect good practical performance in the low precision setting, similarly to high precision Normalized IHT.

\paragraph*{Limitations on ${\boldsymbol{\beta}}_{2s}$ and $\hat{\boldsymbol{\beta}}_{2s}$} In QIHT, the measurement matrix ${\bf \Phi}$ is scale-invariant, and rigorous theoretical guarantees are achievable provided its scaling onto sparse vectors is confined in certain intervals, i.e., the RIP condition. 

The recovery error bound satisfying (\ref{main_theorem}) depends on the error terms in (8), ${\epsilon}_s$ and ${\epsilon}_q$, which are inversely proportional to ${\beta}_{2s}$ and $\hat{\beta}_{2s}$, respectively. For sufficiently large values, which would compensate for $\|{\bf e}\|$ and $\sqrt{M} \|{\bf x}^s \|$, the low precision approach appears competitive with the unmodified algorithm where the recovery error is bounded by $9\epsilon_s$ in (\ref{NIHT_error_bound}). Furthermore, the scale-invariance property of the measurement matrix ${\bf \Phi}$ permits us to scale up ${\beta}_{2s}$, and hence $\hat{\beta}_{2s}$, retaining a strong recovery guarantee, similar to that of the full precision algorithm. Scaling ${\bf \Phi}$ has no effect on the RIP condition.
 
\paragraph*{On the quantization error $\epsilon_q$} From the definition of $\epsilon_q$ we infer that the quantization errors introduced by the low precision measurement matrix and the measurements individually differ by a scale factor of $\|{\bf x}_s \|$ for the same bit widths. We argue that the approximation error caused by quantizing the measurement matrix would get smaller as $s$ gets smaller. Moreover, the scale invariant property of the measurement matrix can enable $\|{\bf x}_s\| < 1$ to hold, yet can potentially strengthen the effect of noise.

\paragraph*{Comparison to other state-of-the-art} The compressive sensing literature covers a range of algorithms including $\ell_1$-minimization and greedy- and thresholding-based methods, each with its own trade-offs. {CoSaMP}, normalized {IHT} and $\ell_1$-minimization exhibit similar empirical performance in~\cite{blumensath2010niht}, when applied to the problems with dense Gaussian matrices. Moreover, after tuning of the step size parameter, Normalized {IHT} is competitive to these powerful methods with similar provable guarantees~\cite{blumensath2012greedy}. Considering that the performance of Normalized IHT compared to other state-of-the-art methods is already well-studied in the literature and is superior in most cases, we focus only on comparing QIHT to the Normalized IHT in this paper. 

\section{Numerical Experiments}\label{Sec:NumericalExperiments}

The goal of this section is to examine the practical performance of our method. 
To provide more intuition, we first run synthetic experiments to quantify the performance gap between {QIHT} and Normalized {IHT} on a toy example: artificially generated data where the data points are drawn from independent and identically distributed (i.i.d.) Gaussian distributions. It is shown, for instance in~\cite{xu2014GaussianRIP, baraniuk2008RIP}, that the Gaussian matrices satisfy the RIP with high probability. The choice of such experimental data therefore helps us to better understand how the performance gap scales with the reduced number of precision levels, in a regime where the theoretical conditions do hold.

We then extend our focus to real-world larger-scale problems from radio astronomy and magnetic resonance imaging. We apply {QIHT} to (a) the radio interferometer measurements recorded by a real telescope: the LOw Frequency ARray (LOFAR), and (b) $k$-space subsamples recorded from the two-dimensional Fourier domain of a representative brain image. For both applications, we model the imaging problem in the compressive sensing framework and demonstrate that the accuracy achieved in the low precision setting is comparable with the one obtained by high precision solvers. Finally, we examine the speed-ups obtained by {FPGA} and {CPU} implementations.

\subsection{Experiments on Synthetic Data}

\paragraph*{Data} We draw the entries of $\mathbf{\Phi}\in \mathbb{R}^{128\times 1024}$, $\mathbf{x}^s\in \mathbb{R}^{1024}$ and $\mathbf{e}\in \mathbb{R}^{128}$ from an i.i.d. Gaussian distribution with zero mean and unit variance, $\mathcal{N}(0,1)$, such that the sparsity of $\mathbf{x}$, i.e., $s=|\supp(\mathbf{x})|$, varies from 4 to 128 in steps of 4. 

\paragraph*{Accuracy} We first compare the recovery performance of {QIHT} to {IHT} in the absence of noise, i.e., $\mathbf{y}=\mathbf{\Phi}\mathbf{x}$. To quantify the performance gap, we estimate (a) the recovery error: $\|\mathbf{x}^{n}- \mathbf{x}\|/\|\mathbf{x}\|$, and (b) the support recovery, i.e., the normalized support of $\mathbf{x}$ that is successfully recovered. We estimate the above measures by averaging over 100 realizations of data.
\begin{figure*}[t!]
 \centering
   \includegraphics[width=0.98\textwidth]{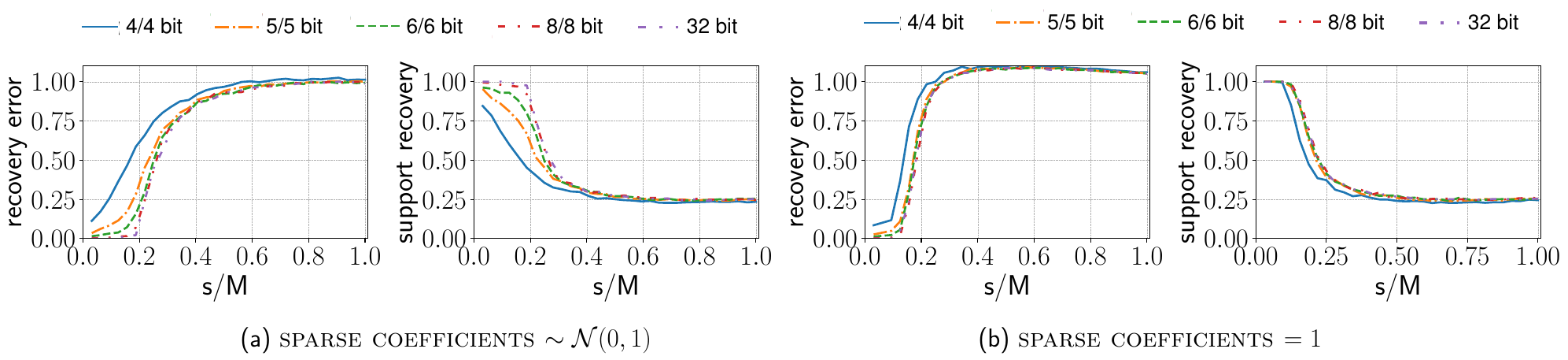}
\caption{Comparison of recovery error and support recovery for various precision levels.}
   \label{Fig:synthetic_noiseless}
\end{figure*}

\begin{figure*}[t!]
 \centering
   \includegraphics[width=1\textwidth]{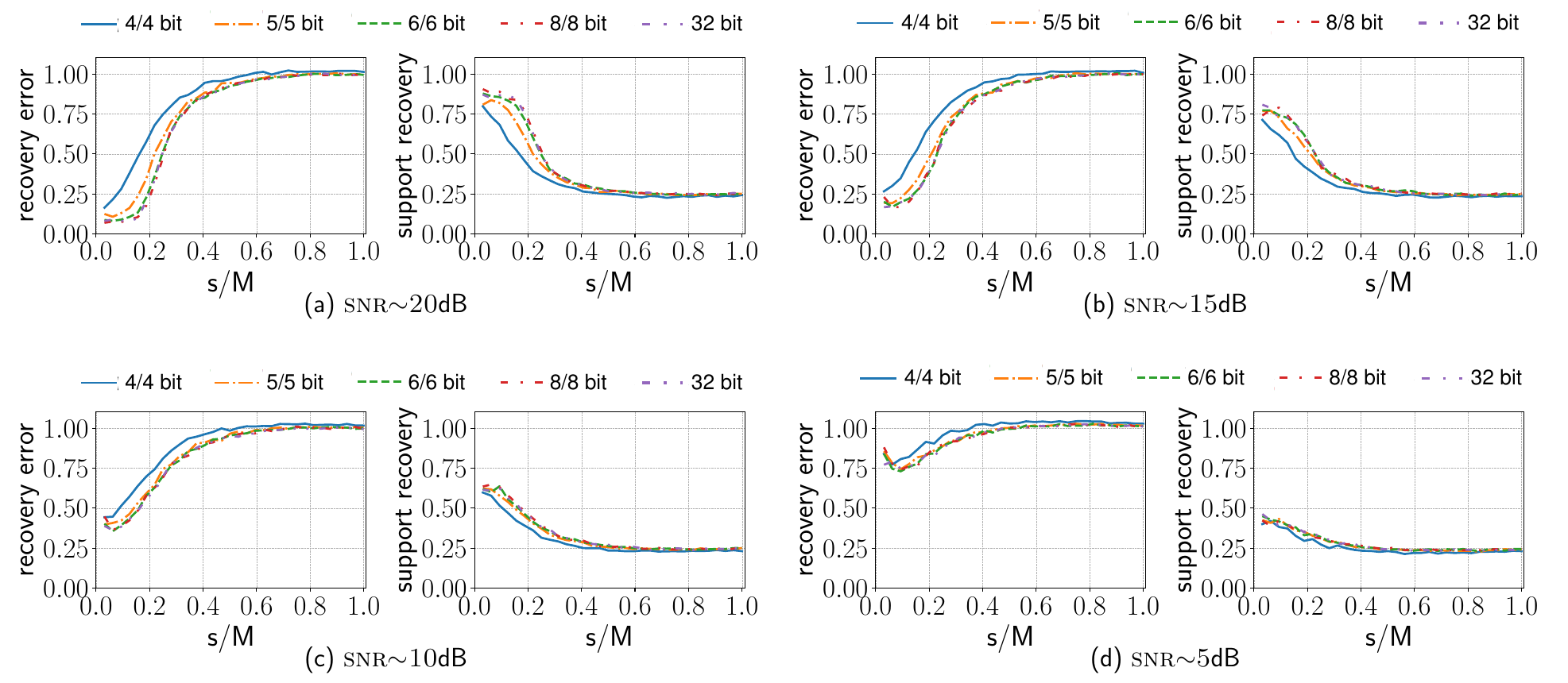}
\caption{Comparison of the recovery error and the support recovery at different precision levels and for different amounts of noise corruptions.}\label{Fig:synthetic_noise}
\end{figure*}

The results shown in Figure~\ref{Fig:synthetic_noiseless}(a) indicate that {QIHT} can achieve a recovery performance that is close to the normalized {IHT} even when as few as 5 bits are used. As expected, {QIHT} performs slightly worse when the precision is too aggresively lowered, for instance, down to $b_{{\bf \Phi}}=4$ and $b_{{\bf y}}=4$ bits. Yet the precision levels that preserve the quality of the results still can provide a significant speed-up in computation time for recovery.

We now consider a case, which is often considered challenging for sparse recovery algorithms: when the entries of $\mathbf{x}$ are of equal magnitude. We repeated the above experiment by setting the nonzero entries of $\mathbf{x}$ to 1 and demonstrate the performance of {QIHT} at several precision levels in Figure~\ref{Fig:synthetic_noiseless}(b). While Normalized {IHT} in general seems markedly worse at this setting, {QIHT} yields a performance as good as its high precision variant and the recovery gap between both methods becomes negligible.

This can be justified as follows: When the data is at low precision, the sparse coefficients are recovered by repeatedly using a linear transformation that has only a few number of precision levels, enforcing the sparse coefficients that are close to each other to be recovered as the similar magnitude. Hence, when the coefficients are of the same value, the precision of the recovered values is not important, leading eventually to less recovery error.

\paragraph*{Robustness to noise} 
In real-life applications, measurements are usually corrupted by noise. The theoretical bounds of the low precision variant of the Normalized {IHT} on recovery error, as given in~(\ref{NIHT_error_bound}), (\ref{epsilon_s}) and (\ref{main_theorem_paper}), suggest that lowering the precision of the input data only slightly increases the noise sensitivity. 

We therefore investigate the influence of lowering the precision on the recovery performance by corrupting the observations with different levels of noise. Figure~\ref{Fig:synthetic_noise} demonstrates the performance of {QIHT} for various levels of noise corruption, validating our theoretical observations that quantization does not amplify the effect of noise corruption on the sparse recovery.

Comparison of the Normalized {IHT} to other state-of-the-art methods such as {CoSaMP}, $\ell_1$-minimization for similarly generated artificial data is performed, for example, in~\cite{blumensath2010niht, blumensath2012greedy}. We defer to these references for further comparison.

\subsection{Real-World Applications}

Motivated by the success of {QIHT} on artificially generated data, we apply our framework to two larger scale real-world settings in radio astronomy and magnetic resonance imaging. The measurement matrix used in the compressive sensing formulation for these applications contains spatial information on a two-dimensional Fourier space, i.e., relative distances between entries are induced by the respective sensor locations. Therefore, useful information in the linear transformation matrix is preserved even when the precision is lowered, which results in little loss of visual information for the underlying image we aim to recover. In the following experiments, we show that this intuition is indeed correct and confirmed by the good performance of QIHT.

\paragraph{Radio Astronomy}
We consider a  radio astronomy application, in which radio interferometers at various locations on the ground record radio waves emitted by the celestial sources over a certain time interval, and then store and process these signals to deduce a sky image~\cite{hogbom1974clean}. Interferometers first estimate the cross-correlation between the time series measurements, called visibilities. The visibilities correspond to subsamples of a sky map in the Fourier domain where the sample point is a function of the antenna locations (van Cittert-Zernike theorem~\cite{taylor1999}). For the point source recovery problem, radio interferometry imaging inherently can be formulated as a sparse signal recovery problem. 

The usual strategy to date is to deconvolve the inverse Fourier transform of visibilities to form a sky map by iteratively removing a fraction of the highest peak, convolved with an instrument-based point spread function~\cite{hogbom1974clean}. Moreover, recently, the radio astronomy community has started to formalize
the radio interferometer problems also as
compressive sensing~\cite{wiaux2009csforra, wenger2010csforra, li2011deconvolution}. We directly follow this formulation.

The following is a standard formulation of the problem. Assume the sky is observed by employing $L$ antennas over a stationary time interval where the earth's rotation is negligible. Denote the vectorized sky image by ${\bf x} \in \mathbb{R}^N$ with $N = {r^2}$ where $r$ is the resolution of images, i.e., height and width of the image in pixels. 

We formulate the interferometer pipeline as a compressive sensing problem such that ${\bf y} = {\bf \Phi} {\bf x} + {\bf e}$ where ${\bf \Phi} \in \mathbb{C}^{M\times N}$ is the measurement matrix with complex entries as a function of the inner product between antenna and pixel locations, ${\bf y} \in \mathbb{C}^{M}$ contains the visibilities where $M=L^2$, and ${\bf e} \in \mathbb{C}^M$ is the noise vector. 

Real-life problems usually do not satisfy the traditional {RIP} condition $\|{\bf \Phi}\| < 1$. The scale-invariant feature of the measurement matrix used in Normalized {IHT} however alleviates the {RIP} issue and imposes a fairly mild constraint, i.e., the non-symmetric {RIP}. In a series of papers~\cite{blumensath2010niht, blumensath2012greedy}, {CoSaMP} is shown to perform markedly worse when the {RIP} condition fails. The Normalized {IHT}, however, still preserves its near-optimal recoveries far beyond the region of {RIP}. Motivated by this, we apply Normalized IHT and QIHT to real radio telescope data. 
 
Recall from Theorem~\ref{main_theorem_TH} the two conditions ensuring  performance guarantees: (a) $\gamma_{2s}, \hat{\gamma}_{2s} \leq 1/24$, and (b) ${\hat{\beta}_{2s}}$ must be large to minimize the quantization error $ {\epsilon}_q$. Fortunately, in the radio astronomy application, the image grid we initially form and the set of antennas used as well as the scale-invariant property of Normalized {IHT} give us control over $\gamma_{2s}, \hat{\gamma}_{2s}$ and ${\beta}_{2s}, \hat{\beta}_{2s}$, respectively, through a pre-processing of ${\bf \Phi}$. We leave this as future work.

We recover a sky image with a resolution of $256\times 256$ pixels (${\bf x} \in \mathbb{R}^{65,536}$ in vectorized form) by employing 30 low-band antennas of the LOFAR CS302 station that operate in the 15--80 MHz frequency band and in a Field of View (FoV) of 2 degrees where the sky is populated with 30 strong sources, that is, ${\bf y} \in \mathbb{C}^{900}$, $\hat{\bf \Phi} \in \mathbb{C}^{900\times 65,536}$. We note here that 30 antennas lead to a visibility matrix of size 30$\times$30, i.e., the measurement vector is of size 900. The signal-to-noise ratio ({SNR}) is assumed to be 5 dB at the antenna level, i.e., $10\log_{10}(\|{\bf \Phi x}\|^2/\|{\bf e}\|^2) = 5$ dB. 

Figure~\ref{summary_of_results}(a) provides an example of sky recoveries: (a) ground truth estimated over 12 hours of observation, (b) a least square estimate of underlying sky (or dirty image in the nomenclature of radio astronomy), (c) {32} bit and (d) {2}/{8} bit {QIHT} which uses {2} bit for the measurement matrix and {8} bit for the observation. This experiment indicates that {QIHT} captures the sky sources successfully even when only {2} bits are used to compress ${\bf \Phi}$. Thus, we can drastically reduce the data precision without significantly degrading the sky image quality.

This strong empirical performance is not completely surprising. Mathematically, the measurement matrix we formed here reflects the phase relations induced by the antenna locations.  That is, each time $r_{m, n}$ or $c_{m, n}$ flips its sign where $\hat{\bf \Phi}_{m, n} = r_{m, n} +j c_{m,n}$, $m = {1, 2, ..., M}$ and $n = {1, 2, ..., N}$, the change in horizontal and vertical directions on the ground enables preserving the phase information required for interferometric imaging even at very low precision.

\begin{figure*}[t!]
\centering
\includegraphics[width=0.8\textwidth, angle=0]{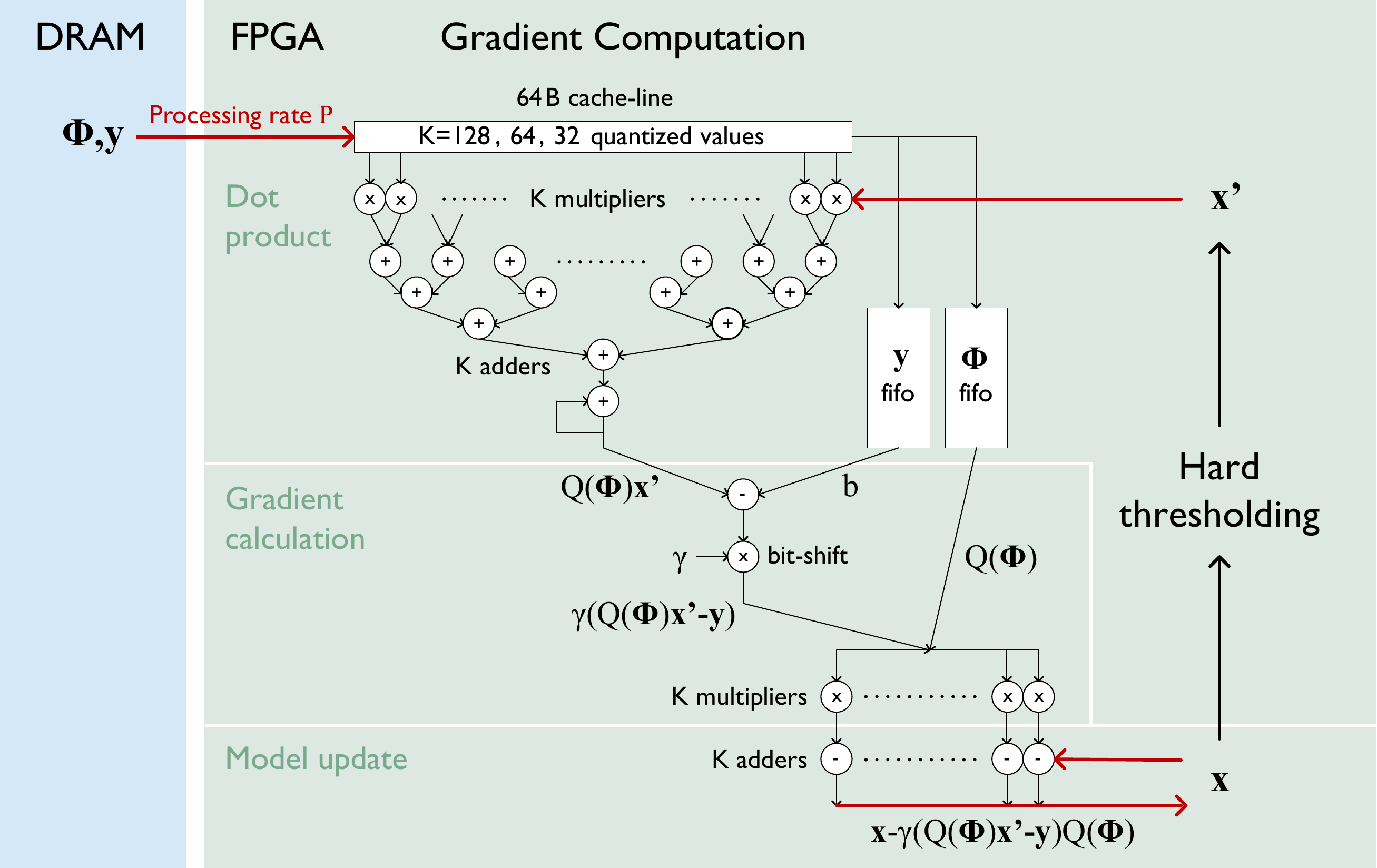}
\caption{{QIHT} on an {FPGA}-based system.}
\label{fig:fpga}
\end{figure*}

We evaluate {QIHT} through (1) the recovery error, and (2) the support recovery. In radio astronomy, it is customary to use a number of true celestial sources resolved in the recovered image as a performance metric, i.e., true-positive findings. That is, the performance of the algorithms is no longer described by its ability to recover support entirely but the sky objects, which possess higher error tolerance. 

\paragraph{Magnetic Resonance Imaging}

Compressive sensing enables faster magnetic resonance imaging ({MRI}) by acquiring less data through undersampling in the measurement space, hence accelerating the scan time. While \textit{Nyquist} criteria are violated due to the undersampling, the image is still reconstructed with little or no perceptible loss of visual information, established by a substantial body of work, for example~\cite{Lustig2007SparseMT,Lustig2008CompressedSM}.
The key ingredient behind this success is that magnetic resonance ({MR}) images exhibit a sparse representation in a known and fixed mathematical transform domain, i.e., the wavelet transform domain. A standard strategy is, therefore, to decode the sparse coefficients based on the undersampled measurements and store them for later encoding and reconstruction of the image. 

In {MRI}, the measurements are two-dimensional Fourier coefficients of the image, the so-called $k$-space samples. Inverse Fourier reconstruction of the image from the undersampled $k$-space data, however, is known to proudce aliasing artifacts. In order to mitigate undersampling artifacts, the compressive sensing algorithm iteratively finds an estimate of sparse coefficients. In our notation, $\boldsymbol{\Phi}$ is formed by Fourier and inverse wavelet transforms and sampling operator, $\textbf{x}$ has one-dimensional sparse coefficients, and finally, $\textbf{y}$ is a vector of undersampled $k$-space data. 

The performance of Normalized {IHT} on the {Shepp-Logan} phantom was previously studied in~\cite{blumensath2010niht}. Instead, we tested {QIHT} on a representative brain image${}^1$ of size 512$\times$512 in pixels and compare our results to the reconstructed image through $\ell_1$-minimization using the {SparseMRI} software\footnote{available on {\scriptsize {http://people.eecs.berkeley.edu/\string~mlustig/Software}}}. We subsample $k$-space data by a factor of 3 using a radial sampling mask.

The brain image reconstructed by various algorithms depicted in Figure~\ref{fig:result_highlight}(b) reveals that {QIHT} still yields a similarly good performance as the Normalized {IHT} and $\ell_1$-minimization when the bit-widths of the $k$-space data and the transformation matrices are lowered down to 8 and 12 bit, respectively. 

While offering accelerated image recovery for {MRI}, low precision data representation can potentially reduce the storage required to keep patients raw data as discussed in~\cite{poldrack_mumford_nichols_2011, Langer2011}.

\subsection{Implementation and Performance}

We demonstrate the speed-up obtained by performing {QIHT} in the previous two applications on both {FPGA} and {CPU} when reducing the number of bits used for the data representation.

\paragraph{{FPGA} implementation} 
Field-Programmable Gate Arrays ({FPGA}) are an alternative to commonly used Graphics Processing Units ({GPU}) for accelerated processing of compute-intensive signal processing workloads. The reconfigurable logic fabric of an {FPGA} enables the design of custom compute units, that can be advantageous when working on low-precision and uncommon numeric formats, such as {2}-bit numbers. Thanks to this microarchitectural flexibility, it is possible to achieve near linear speed-up when lowering the precision of data that is read from memory. This has been shown recently for stochastic gradient descent ({SGD}) when training linear models~\cite{zhang2017zipml, kara2017fpga}. In this work, we use the open-source {FPGA} implementation\footnote{\url{https://github.com/fpgasystems/ZipML-PYNQ}} from the above mentioned works and modify it to perform {QIHT}.

In terms of the computation, we modify two parts of the design to convert it from performing {SGD} to {IHT}. First, instead of updating the model after a mini-batch count is reached, we update it after all samples are processed and the true gradient is available. Second, after each epoch, we perform a binary search on the updated model to find the threshold value satisfying that only top $s$ values are larger than the threshold.
The rest of the design stays the same, including the fixed-point computation, utilized to minimize the usage of available FPGA resources.

\begin{figure*}[t!]
\subfloat[][]{
   \includegraphics[width=1\textwidth]{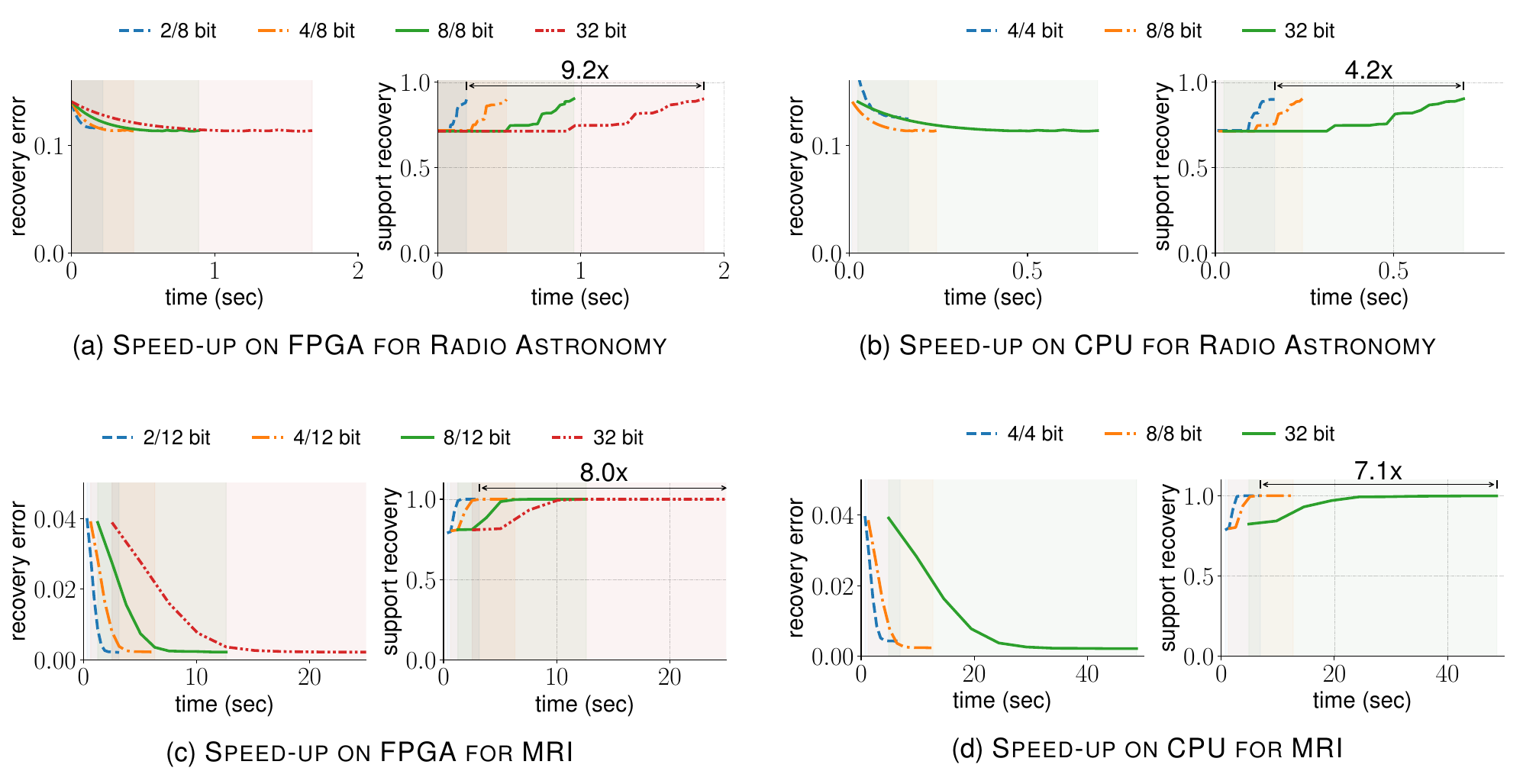}
 }
\caption{Speed-up on image recovery enabled by {QIHT} on FPGA and {CPU} on real radio telescope and MRI data.}\label{systems_speedup}
\end{figure*}

\paragraph*{FPGA performance analysis} 
The gradient computation unit in Figure~\ref{fig:fpga} reads the measurement matrix ${\bf \Phi}$ and the measurements ${\bf y}$ from the main memory and keeps ${\bf x}$ in on-chip memory. We note that transferring ${\bf \Phi}$ from main memory will be necessary in most practical settings, where the matrix ${\bf \Phi}$ is too large to fit onto the {FPGA}.
The {FPGA} is able to consume and process the data from the memory at a rate of $P=12.8$ GB/s. Thus, the performance is bounded by ${P}$ for processing ${\bf \Phi}$ and ${\bf y}$.
The time for each iteration is approximately 
$T = \size({\bf \Phi})/{P}$, since $size({\bf y}) \ll size({\bf \Phi})$. 
Theoretically, we can achieve a significant speed-up by using a quantized ${\bf \Phi}$, simply because we reduce the amount of data to be consumed by the FPGA: more entries arrive with each transfer from the main memory. 
The essential idea behind achieving linear speed-up is lowering the precision of ${\bf \Phi}$ while keeping ${P}$ constant. This is possible, because we can adapt the gradient computation unit's microarchitecture and increase its internal parallelism to handle more values per incoming line, thanks to the FPGA's architectural flexibility.

\paragraph*{Computing ${\bf \Phi}$ on the fly} The above analysis focuses on the case where ${\bf \Phi}$ is stored in main memory, in which case quantization helps to reduce the amount of data transferred between the main memory and {FPGA}. In some applications, ${\bf \Phi}$ can be calculated on the fly, inside the {FPGA}. Also in this case, quantization can help in achieving better performance.
The reason is that quantizing ${\bf \Phi}$ also saves other crucial resources (e.g., multipliers) that are limited on an {FPGA}. These resource savings, in turn, enable higher internal parallelism, for instance, to speed up the computation of ${\bf \Phi}{\bf x}$. For example, it has been shown that to increase the dot-product parallelism from 64 to 128 while maintaining the rate of operations per cycle, it is necessary to lower the precision of one side of the dot product to {2}-bits; otherwise, the resource consumption is too high to fit the design to one FPGA~\cite{kara2017fpga}.

The performance of the {FPGA}-based implementation is presented in Figure~\ref{systems_speedup}(a) and (c). For the time spent per iteration, we see that quantization, and the resulting compression of the measurement matrix ${\bf \Phi}$ leads to a near linear speed-up for recovering the support vector. All variants (full precision to lowest precision) of the Normalized {IHT} on {FPGA} can consume ${\bf \Phi}$ at the same rate, and therefore the runtime of {QIHT} depends linearly on the size of ${\bf \Phi}$, yielding the linear speed-ups that we observe in the experiments.
In terms of end-to-end performance, we measure the time needed for each precision level to reach support recovery ratio 90\% and calculate the speed-up. The {2}/{8} bit {QIHT} reaches the same support recovery ratio 9.2x faster.

\paragraph{{CPU} implementation} 

On a CPU, it is possible to achieve near-linear speedup when reducing the size of the data representation despite lacking the necessary instructions to compute with 4, 8, or 16-bit integer operands. This has been previously demonstrated for both gradient descent ({GD}) and {QIHT}~\cite{stojanov2018fast}.

In order to perform the low-precision computations on a CPU without instructions supporting low-precision arithmetic, low-precision data is first converted to 32-bit floating point. Thus the instructions used to for low-precision arithmetic are actually less efficient than using single-precision 32-bit arithmetic. The advantage on a {CPU} is that the low-precision representation results in less data movement.

To perform {CPU} experiments, we build on the implementation Clover from~\cite{stojanov2018fast}. The main extension needed was support for complex arithmetic. The bulk of the computation for both GD and IHT is dominated by two matrix-vector multiplication operations. The first is a dense matrix times a sparse vector, and the second a dense matrix times a dense vector. The former is implemented as a loop around a dot-product operation for {4}- and {8}-bit and uses the BLAS {\texttt gemv} routine for {32} bit. The latter is implemented as a loop around a dense scale and add operation for all three supported datatypes.

Our {CPU} implementation uses handwritten code in AVX2 intrinsics and supports {4}-bit, {8}-bit, and {32}-bit precisions. We use OpenMP to parallelize our implementation, XORShift to generate random numbers for stochastic rounding, and the Intel math kernel library (MKL) for the 32-bit matrix-vector multiplication. We used two different systems for our experiments. The radio astronomy experiment was run on an Intel Xeon CPU E3-1285L v3 3.10GHz, with 32GB of RAM and 25.6 GB/s bandwidth to main memory, running Debian GNU/Linux 8 (jessie), kernel 3.16.43-2+deb8u3. The MRI experiment was run on an Intel Xeon E5-2690 v4 CPU with 512 GB of RAM, 153.6 GB/s bandwidth to main memory, running Ubuntu 16.04.6 LTS with kernel version 4.4.0-148-generic. We use the Intel icc compiler 17.0.0, Intel IPP 2017.0.0 (r52494), and Intel MKL 2017.0.0 (Build 20160801). The {RDTSC} instruction is used to measure the cycle count for each iteration, and we report the median. Turbo Boost and Hyper-threading were disabled to avoid the effects of frequency scaling and resource sharing on the measurements.

We show performance plots for {CPU} speed-up in Figure~\ref{systems_speedup} (b) and (d). On both data sets, we obtain up to a 2.84x speed-up for the the 8-bit implementation, and 7.1x for the {4}-bit implementation, with similar recovery properties as for {FPGA}.

\section{Conclusion}

We investigated low precision schemes for sparse signal recovery problems with particular focus on the case in which both the observation vector and the measurement matrix are quantized. As main contribution, we introduced for this situation a low-precision Normalized {IHT} variant for stochastically quantized data, called {QIHT}. We derived theoretical guarantees and demonstrated good practical performance, both in terms of accuracy and recovery time in two application areas, radio astronomy and MRI, and with both {CPU} and {FPGA} implementations. Possible future work includes algorithms that work with end-to-end low precision data representation, and extensions to other greedy recovery algorithms and sparse recovery frameworks.

\section*{Acknowledgment}
The authors would like to thank Dr. Michiel Brentjens at the Netherlands Institute for Radio Astronomy (ASTRON) for providing radio interferometer data and Dr. Josip Marjanovic and Dr. Franciszek Hennel at the Magnetic Resonance Technology of ETH Zurich for providing their insights on the experiments.

\section*{Appendix}\label{Sec:Proofs}
\subsection{Preliminaries}
We begin by introducing our notation.
\begin{enumerate}
    \item ${\bf y} = {\bf \Phi} {\bf x} + {\bf e} = {\bf \Phi} {\bf x}^s + {\bf \Phi}({\bf x}-{\bf x}^s) + {\bf e}$
    \item $\boldsymbol{\varepsilon} = {\bf \Phi}({\bf x}-{\bf x}^s) + {\bf e}$, hence ${\bf y} = {\bf \Phi} {\bf x}^s + \boldsymbol{\varepsilon}$
    \item $\boldsymbol{\epsilon_y} = Q({\bf y},b) - {\bf y}$
    \item $\Gamma^{[n]} = \rm{supp}\{{\bf x}^{[n]} \}$, $\hat{\Gamma}^{[n]} = \rm{supp}\{\hat{{\bf x}}^{[n]} \}$ and\\
    $\Gamma^{s} = \rm{supp}\{{\bf x}^{s} \}$
    \item $\hat{B}^{[n]} = \hat{\Gamma}^{[n]} \cup \Gamma^{s}$
    \item ${\bf a}^{[n+1]} = \hat{{\bf x}}^{[n]} + \mu^{[n]}{\bf \Phi}^{\dag}({\bf y} - {\bf \Phi}\hat{{\bf x}}^{[n]})$ and\\
    $\hat{{\bf a}}^{[n+1]} = \hat{{\bf x}}^{[n]} + \hat{\mu}^{[n]}Q_{1}({\bf \Phi})^{\dag}({\bf y} - Q_2({\bf \Phi})\hat{{\bf x}}^{[n]})$,
    \item ${\bf x}^{[n+1]} = H_s({\bf a}^{[n+1]})$ and $\hat{{\bf x}}^{[n+1]} = H_s(\hat{{\bf a}}^{[n+1]})$
    \item ${\bf r}^{[n]} = \hat{{\bf x}}^{[n]}-{\bf x}^s$.
\end{enumerate}
Assume $\mathbf{\bf \Phi}$ satisfies the non-symmetric Restricted Isometry Property (RIP)
\begin{equation}\label{Eq:rip}
\alpha_{s} \leq \frac{\|{\bf {\bf \Phi}} {\bf x}\|_2}{\|{\bf x}\|_2} \leq \beta_{s}
\end{equation}
for all ${\bf x}: \|{\bf x}\|_0 \leq s$, where $\alpha_s\in \mathbb{R}$ and $\beta_s \in \mathbb{R}$ are the lowest and largest singular value of ${\bf \Phi}$ such that $0<\alpha_s\leq \beta_s$, the so-called restricted isometric constants. Inherent from its definition, the RIP for the quantized measurement matrix denoted by $Q({\bf {\bf \Phi}}, b_m)$ refers to that $\forall Q({\bf {\bf \Phi}}, b_m)$, we have
\begin{equation}\label{Eq:rip_lowprecision}
\hat{\alpha}_{s} \leq \frac{\|Q({\bf {\bf \Phi}}, b_m) {\bf x}\|_2}{\|{\bf x}\|_2} \leq \hat{\beta}_{s}
\end{equation}
where $\hat{\alpha}_{s}$ and $\hat{\beta}_{s}$ are the associated restricted isometry constants. For simplicity, we drop $b_m$, and use $Q({\bf {\bf \Phi}})$ instead. 

The adaptive setting of step size parameter $\mu^{[n]}$ in normalized {IHT} is shown to satisfy $1/\beta^2_{2s} \leq \mu^{[n]} \leq 1/\alpha^2_{2s}$ if $\mu^{[n]}$ is set to ${{\bf g}^{\dagger}_{\Gamma^{[n]}}{\bf g}_{\Gamma^{[n]}}}/{{\bf g}^{\dagger}_{\Gamma^{[n]}}{\bf \Phi}^{\dagger}_{\Gamma^{[n]}}{\bf \Phi}_{\Gamma^{[n]}}{\bf g}_{\Gamma^{[n]}}}$ at each iteration and $1/k\beta^2_{2s} \leq \mu^{[n]} \leq 1/\alpha^2_{2s}$ otherwise~\cite{blumensath2010niht}. Similar inequality also holds in the quantized setting such that $1/\hat{\beta}^2_{2s} \leq \hat{\mu}^{[n]} \leq 1/\hat{\alpha}^2_{2s}$ if $\hat{\mu}^{[n]}$ is set to ${\hat{\bf g}^{\dagger}_{\hat{\Gamma}^{[n]}}\hat{\bf g}_{\hat{\Gamma}^{[n]}}}/{\hat{\bf g}^{\dagger}_{\hat{\Gamma}^{[n]}}\hat{\bf \Phi}^{\dagger}_{\hat{\Gamma}^{[n]}}\hat{\bf \Phi}_{\hat{\Gamma}^{[n]}}\hat{\bf g}_{\hat{\Gamma}^{[n]}}}$ at each iteration, and $1/{k}\hat{\beta}^2_{2s} \leq \hat{\mu}^{[n]} \leq 1/\hat{\alpha}^2_{2s}$ otherwise.

Recall from Theorem~\ref{main_theorem_TH} that depending on the step size, $\gamma_s$ is defined as either $\beta_s/\alpha_s-1$ or $\max \{1-\alpha_s/k\beta_s, \ \beta_s/\alpha_s-1\}$. That also holds for $\hat{\gamma}_s$ by replacing the restricted isometry constants with that of quantized measurement matrix $Q(\boldsymbol{\Phi})$; $\hat{\alpha}_s$ and $\hat{\beta}_s$.\\ 
\begin{remark}\label{Remark:mu_bounds}
 Using the definitions of $\gamma_s$ and $\hat{\gamma}_s$ as well as bounds on $\mu^{[n]}$ and $\hat{\mu}^{[n]}$, we further have
\begin{equation*}\label{bounds_mu_gamma}
   (1-\gamma_{2s})/\alpha^2_{2s} \leq \mu^{[n]} \leq (1+\gamma_{2s})/\beta^2_{2s},
\end{equation*}
\begin{equation*}\label{bounds_muhat_gamma}
   (1-\hat{\gamma}_{2s})/\hat{\alpha}^2_{2s} \leq \hat{\mu}^{[n]} \leq (1+\hat{\gamma}_{2s})/\hat{\beta}^2_{2s}.
\end{equation*}
\end{remark}

Based on the properties above, the RIP and the adaptive step size, which we will require repeatedly throughout the proof, has several other consequences, summarized as follows.\\
\begin{lemma}\label{lemma_auxiliary_results}
Suppose ${\bf \Phi}$ and $Q(\boldsymbol{{\bf \Phi}})$ satisfy RIP in~(\ref{Eq:rip}) and (\ref{Eq:rip_lowprecision}), respectively. Let moreover $\Gamma, \Upsilon$ and $\Lambda$ has cardinality at most $\min\big(rank(\boldsymbol{\Phi}), rank(Q(\boldsymbol{\Phi}))\big)$ and $\Upsilon$ and $\Gamma$ are disjoint, $\Upsilon \cap \Lambda=\emptyset$. Then
\begin{equation*}\label{mu_phi}
\begin{split}
    \| \big({\mu}^{[n]}{\bf {\bf \Phi}}_{\Gamma}^T-\hat{\mu}^{[n]}Q({\bf {\bf \Phi}})_{\Gamma}^T\big){\bf x}_{\Gamma}\|_2\stackrel{(1)}\leq \max\big ( ({1+\gamma_{|\Gamma|}})/{\beta_{|\Gamma|}},{1+\hat{\gamma}_{|\Gamma|}})/{\hat{\beta}_{|\Gamma|}}\big ) \| {\bf x}_{\Gamma}\|_2, 
    \end{split}
\end{equation*} 
\begin{equation*}\label{mu_phi2}
\begin{split}
    \|\big( \mu^{[n]}{\bf \Phi}_{\Gamma}^T\boldsymbol{\Phi}_{\Gamma} - \mu^{[n]}Q_1(\boldsymbol{\Phi})_{\Gamma}^T Q_2(\boldsymbol{\Phi})_{\Gamma}\big){\bf x}_{\Gamma}\|_2\stackrel{(2)}\leq (\gamma_{|\Gamma|}+\hat{\gamma}_{|\Gamma|})\| {\bf x}_{\Gamma}\|_2
    \end{split}
\end{equation*}
\begin{equation*}\label{mu_phi2_diff}
\begin{split}
    \| \big(\mu^{[n]}\boldsymbol{\Phi}_{\Upsilon}^T\boldsymbol{\Phi}_{\Lambda} - \mu^{[n]}Q_1(\boldsymbol{\Phi})_{\Upsilon}^TQ_2(\boldsymbol{\Phi})_{\Lambda}\big){\bf x}_{\Lambda}\|_2\stackrel{(3)}\leq (\gamma_{|\Upsilon \cup \Lambda|}+ \hat{\gamma}_{|\Upsilon \cup \Lambda|})\| {\bf x}_{\Lambda}\|_2. 
    \end{split}
\end{equation*}
\end{lemma}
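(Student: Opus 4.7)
All three inequalities reduce to a common recipe: decompose the operator inside the norm by adding and subtracting a well-chosen reference operator, bound the individual pieces using either the non-symmetric RIP (Eqns.~(\ref{rip}) and (\ref{rip_lowprecision})) or the adaptive step-size bounds (\ref{bounds_mu_gamma})--(\ref{bounds_muhat_gamma}), and close by the triangle inequality. The workhorse I will invoke repeatedly is the fact that, for any support $\Gamma$ with $|\Gamma|\leq 2s$, the self-adjoint operator $\mu^{[n]}{\bf \Phi}_{\Gamma}^{T}{\bf \Phi}_{\Gamma}$ has spectrum in $[\mu^{[n]}\alpha_{|\Gamma|}^{2},\mu^{[n]}\beta_{|\Gamma|}^{2}]\subseteq[1-\gamma_{|\Gamma|},1+\gamma_{|\Gamma|}]$, so that $\|(\mu^{[n]}{\bf \Phi}_{\Gamma}^{T}{\bf \Phi}_{\Gamma}-I){\bf x}_{\Gamma}\|_{2}\leq\gamma_{|\Gamma|}\|{\bf x}_{\Gamma}\|_{2}$, together with the analogous statement for $Q({\bf \Phi})$ with $\hat\mu^{[n]},\hat\gamma_{|\Gamma|}$.

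For (\ref{mu_phi}) the triangle inequality splits the LHS into two single-matrix terms and each is bounded as $\|\mu^{[n]}{\bf \Phi}_{\Gamma}^{T}{\bf x}_{\Gamma}\|_{2}\leq \mu^{[n]}\beta_{|\Gamma|}\|{\bf x}_{\Gamma}\|_{2}\leq ((1+\gamma_{|\Gamma|})/\beta_{|\Gamma|})\|{\bf x}_{\Gamma}\|_{2}$ using $\|{\bf \Phi}_{\Gamma}\|_{2}\leq\beta_{|\Gamma|}$ and (\ref{bounds_mu_gamma}); the stated maximum of the two coefficients then dominates the sum in the regime where $\gamma_{|\Gamma|},\hat\gamma_{|\Gamma|}$ are small. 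For (\ref{mu_phi2}) I add and subtract the identity,
\[\mu^{[n]}{\bf \Phi}_{\Gamma}^{T}{\bf \Phi}_{\Gamma}-\mu^{[n]}Q_{1}({\bf \Phi})_{\Gamma}^{T} Q_{2}({\bf \Phi})_{\Gamma}=(\mu^{[n]}{\bf \Phi}_{\Gamma}^{T}{\bf \Phi}_{\Gamma}-I)+(I-\mu^{[n]}Q_{1}({\bf \Phi})_{\Gamma}^{T} Q_{2}({\bf \Phi})_{\Gamma}).\]
The first block contributes $\gamma_{|\Gamma|}$ via the workhorse fact above, while the second is bounded via the RIP constants of the quantized matrix, with the asymmetry $Q_{1}\neq Q_{2}$ handled by a Cauchy--Schwarz/polarization step so that $\|Q_{1}^{T}Q_{2}\|$ is controlled by $\tfrac{1}{2}(\|Q_{1}^{T}Q_{1}\|+\|Q_{2}^{T}Q_{2}\|)$, yielding $\hat\gamma_{|\Gamma|}$. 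For (\ref{mu_phi2_diff}) the same two-block split applies, but on disjoint supports $\Upsilon\neq\Lambda$ the identity $I$ is replaced by $0$; I then invoke the standard RIP polarization identity $\|{\bf \Phi}_{\Upsilon}^{T}{\bf \Phi}_{\Lambda}{\bf x}_{\Lambda}\|_{2}\leq\tfrac{1}{2}(\beta_{|\Upsilon\cup\Lambda|}^{2}-\alpha_{|\Upsilon\cup\Lambda|}^{2})\|{\bf x}_{\Lambda}\|_{2}$, which after multiplying by $\mu^{[n]}$ becomes a $\gamma_{|\Upsilon\cup\Lambda|}$-bound, with an analogous $\hat\gamma_{|\Upsilon\cup\Lambda|}$-bound on the quantized block, and the claimed maximum then follows from the sharper form of the decomposition.

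\textbf{Main obstacle.} The delicate step in (\ref{mu_phi2}) and (\ref{mu_phi2_diff}) is that the step size $\mu^{[n]}$ is calibrated to ${\bf \Phi}$ yet it multiplies the quantized block $Q_{1}({\bf \Phi})^{T}Q_{2}({\bf \Phi})$; the spectral range $[\mu^{[n]}\hat\alpha_{|\Gamma|}^{2},\mu^{[n]}\hat\beta_{|\Gamma|}^{2}]$ is not a priori inside $[1-\hat\gamma_{|\Gamma|},1+\hat\gamma_{|\Gamma|}]$, so the step-size identity for the hatted quantities cannot be applied directly. Bridging this gap relies on the precision hypothesis from Lemma~1 of the paper, which forces $\hat\alpha_{|\Gamma|}\approx\alpha_{|\Gamma|}$ and $\hat\beta_{|\Gamma|}\approx\beta_{|\Gamma|}$: from here I can deduce $\mu^{[n]}\hat\beta_{|\Gamma|}^{2}\leq 1+\hat\gamma_{|\Gamma|}$ from $\mu^{[n]}\beta_{|\Gamma|}^{2}\leq 1+\gamma_{|\Gamma|}$ up to a controllable quantization-error term, and this is the one place where the bit-width $b$ actually enters the argument; the rest of the proof is bookkeeping.
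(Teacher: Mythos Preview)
Your overall strategy---show that each Gram-type block $\mu^{[n]}\Phi_\Gamma^T\Phi_\Gamma$ (resp.\ its quantized analogue) has spectrum in $[1-\gamma_{|\Gamma|},1+\gamma_{|\Gamma|}]$ (resp.\ $[1-\hat\gamma_{|\Gamma|},1+\hat\gamma_{|\Gamma|}]$) and then combine---is exactly the paper's. For (\ref{mu_phi2_diff}) your polarization bound and the paper's observation that $-\Phi_\Upsilon^T\Phi_\Lambda$ is a submatrix of $I-\Phi_{\Upsilon\cup\Lambda}^T\Phi_{\Upsilon\cup\Lambda}$ are the same device in different clothing. Two points deserve comment.

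\textbf{A genuine gap in (\ref{mu_phi}).} Your triangle-inequality split produces the \emph{sum} $(1+\gamma_{|\Gamma|})/\beta_{|\Gamma|}+(1+\hat\gamma_{|\Gamma|})/\hat\beta_{|\Gamma|}$, and your claim that ``the stated maximum then dominates the sum in the regime where $\gamma,\hat\gamma$ are small'' is simply false: $\max(a,b)\le a+b$ for nonnegative $a,b$, never the reverse. The paper does not use the triangle inequality here. Instead it argues directly on singular-value intervals: the singular values of $\mu^{[n]}\Phi_\Gamma$ lie in $[(1-\gamma_{|\Gamma|})/\alpha_{|\Gamma|},(1+\gamma_{|\Gamma|})/\beta_{|\Gamma|}]$, and similarly for $\hat\mu^{[n]}Q(\Phi)_\Gamma$, so the operator norm of the difference is bounded by the upper endpoint of one interval minus the lower endpoint of the other, which is then relaxed to the single upper endpoint $(1+\gamma_{|\Gamma|})/\beta_{|\Gamma|}$ (hence the max in the statement). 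This interval-subtraction step is itself informal for non-commuting matrices, but it is the paper's stated route and it is what yields the max rather than the sum.

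\textbf{Your ``main obstacle'' is not in the paper.} You correctly notice that in (\ref{mu_phi2}) and (\ref{mu_phi2_diff}) the step size $\mu^{[n]}$ multiplies the quantized block $Q_1(\Phi)^TQ_2(\Phi)$, so the inclusion $\mu^{[n]}\hat\alpha_{|\Gamma|}^2,\mu^{[n]}\hat\beta_{|\Gamma|}^2\in[1-\hat\gamma_{|\Gamma|},1+\hat\gamma_{|\Gamma|}]$ is not automatic from (\ref{bounds_muhat_gamma}). The paper, however, does not engage with this at all: it simply asserts that the singular values of $\mu^{[n]}Q_1(\Phi)_\Gamma^TQ_2(\Phi)_\Gamma$ fall in $[1-\hat\gamma_{|\Gamma|},1+\hat\gamma_{|\Gamma|}]$, tacitly identifying $\mu^{[n]}$ with $\hat\mu^{[n]}$ and ignoring the $Q_1\neq Q_2$ asymmetry. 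Your proposed detour through Lemma~1 and the bit-width $b$ is more scrupulous than the paper's own argument, but it is extraneous to what the paper actually does.
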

\begin{proof}
As a simple consequence of RIP, the singular values of ${\bf \Phi}_{\Gamma}$ lie between $\alpha_{|\Gamma|}$ and $\beta_{|\Gamma|}$. Remark~\ref{Remark:mu_bounds} further implies that the singular values of ${\mu}^{[n]}{\bf {\bf \Phi}}_{\Gamma}$ are in $[(1-\gamma_{|\Gamma|})/\alpha_{|\Gamma|}, (1+\gamma_{|\Gamma|})/\beta_{|\Gamma|}]$. Using the similar bound for $Q({\bf \Phi})_{\Gamma}$, maximum singular value of $({\mu}^{[n]}{\bf {\bf \Phi}}_{\Gamma}^T-\hat{\mu}^{[n]}Q({\bf {\bf \Phi}})_{\Gamma}^T)$, i.e., its operator norm, is given by $(1+\gamma_{|\Gamma|})/\beta_{|\Gamma|}-(1-\hat{\gamma}_{|\Gamma|})/\alpha_{|\Gamma|}$. In the first inequality of Lemma~\ref{lemma_auxiliary_results}, we use a looser bound $(1+\gamma_{|\Gamma|})/\beta_{|\Gamma|}$ for simplicity.

Similar argument holds for the second inequality, that is, the singular values of $\mu^{[n]}{\bf \Phi}_{\Gamma}^T\boldsymbol{\Phi}_{\Gamma}$ and $\mu^{[n]}Q_1(\boldsymbol{\Phi})_{\Gamma}^TQ_2(\boldsymbol{\Phi})_{\Gamma}$ fall into $[1-\gamma_{|\Gamma|}, 1+ \gamma_{|\Gamma|}]$ and $[1-\hat{\gamma}_{|\Gamma|}, 1+ \hat{\gamma}_{|\Gamma|}]$, respectively. Then $\|\mu^{[n]}\boldsymbol{\Phi}_{\Gamma}^T\boldsymbol{{\bf \Phi}}_{\Gamma} - \mu^{[n]}Q_1(\boldsymbol{\Phi})_{\Gamma}^TQ_2(\boldsymbol{\Phi})_{\Gamma}\|_2$ is upper bounded by ${\gamma}_{|\Gamma|} + \hat{\gamma}_{|\Gamma|}$, which proves the second inequality. 

The third inequality is a consequence of the fact that $-\mu^{[n]}\boldsymbol{{\bf \Phi}}_{\Upsilon}^T\boldsymbol{{\bf \Phi}}_{\Lambda}$ is a submatrix of $\rm{I}-\mu^{[n]}\boldsymbol{{\bf \Phi}}_{\Upsilon \cup \Lambda}^T\boldsymbol{{\bf \Phi}}_{\Upsilon \cup \Lambda}$
As previously shown, eigenvalues of $\mu^{[n]}\boldsymbol{{\bf \Phi}}_{\Upsilon \cup \Lambda}^T\boldsymbol{{\bf \Phi}}_{\Upsilon \cup \Lambda}$ lie in $[1-\gamma_{|\Upsilon \cup \Lambda|}, \ 1+\gamma_{|\Upsilon \cup \Lambda|} ]$. Hence, eigenvalues of $\mu^{[n]}\boldsymbol{{\bf \Phi}}_{\Upsilon}^T\boldsymbol{{\bf \Phi}}_{\Lambda}$ are in $[-\gamma_{|\Upsilon \cup \Lambda|}, \gamma_{|\Upsilon \cup \Lambda|}]$. The maximum eigenvalue of $(\mu^{[n]}\boldsymbol{\Phi}_{\Upsilon}^T\boldsymbol{\Phi}_{\Lambda} - \mu^{[n]}Q_1(\boldsymbol{\Phi})_{\Upsilon}^TQ_2(\boldsymbol{\Phi})_{\Lambda})$, hence its operator norm, can then be upper bounded by $\gamma_{|\Upsilon \cup \Lambda|} + \ \hat{\gamma}_{|\Upsilon \cup \Lambda|}$.
\end{proof}

\begin{lemma} \label{residual_lemma}
{\rm{\cite{blumensath2010niht}}}
For any ${\bf x}$, let ${\bf x}^s$ be the best s-term approximation to ${\bf x}$ and $\Upsilon$ be a set with at most s elements. Then
\begin{equation}\label{residual_x_bound}
\|\mu^{[n]}{\bf \Phi}^T_{\Upsilon} {\bf \Phi} ({\bf x}-{\bf x}^s)\|_2 \leq (1+\gamma_{2s})\Big [\|{\bf x}-{\bf x}^s\|_2 ]+ \frac{\|{\bf x}-{\bf x}^s\|_1}{\sqrt{s}}\Big].
\end{equation}
\end{lemma}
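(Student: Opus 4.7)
The standard route for a bound of this shape is a block decomposition of the tail ${\bf x}-{\bf x}^s$ combined with the auxiliary estimates of Lemma~\ref{lemma_auxiliary_results}. I would first enumerate the indices of $\operatorname{supp}({\bf x}-{\bf x}^s)$ by decreasing magnitude of the entries, and partition them into disjoint blocks $T_1,T_2,\ldots$ of cardinality at most $|\Upsilon|\le s$, where $T_1$ contains the largest entries, $T_2$ the next largest, and so on. Then $({\bf x}-{\bf x}^s)=\sum_{i\ge 1}({\bf x}-{\bf x}^s)_{T_i}$, so the triangle inequality yields
\begin{equation*}
\|\mu^{[n]}{\bf \Phi}^T_{\Upsilon}{\bf \Phi}({\bf x}-{\bf x}^s)\|_2
\;\le\;\sum_{i\ge 1}\|\mu^{[n]}{\bf \Phi}^T_{\Upsilon}{\bf \Phi}_{T_i}({\bf x}-{\bf x}^s)_{T_i}\|_2.
\end{equation*}

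Next, I would bound each summand by a restricted-isometry-type estimate. For $i=1$, since $|\Upsilon\cup T_1|\le 2|\Upsilon|$ one can control the action of $\mu^{[n]}{\bf \Phi}^T_{\Upsilon}{\bf \Phi}_{T_1}$ by writing it as a submatrix of $\mu^{[n]}{\bf \Phi}^T_{\Upsilon\cup T_1}{\bf \Phi}_{\Upsilon\cup T_1}$, whose singular values lie in $[1-\gamma_{2|\Upsilon|},1+\gamma_{2|\Upsilon|}]$ by Remark~1 and the RIP of ${\bf \Phi}$; this gives the factor $(1+\gamma_{2|\Upsilon|})\|({\bf x}-{\bf x}^s)_{T_1}\|_2\le(1+\gamma_{2|\Upsilon|})\|{\bf x}-{\bf x}^s\|_2$. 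For $i\ge 2$, the supports $\Upsilon$ and $T_i$ are disjoint, so the same submatrix argument (in the spirit of the proof of Eqn.~(\ref{mu_phi2_diff})) yields $\|\mu^{[n]}{\bf \Phi}^T_{\Upsilon}{\bf \Phi}_{T_i}\|_2\le \gamma_{2|\Upsilon|}\le 1+\gamma_{2|\Upsilon|}$, so each tail term contributes at most $(1+\gamma_{2|\Upsilon|})\|({\bf x}-{\bf x}^s)_{T_i}\|_2$.

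The key combinatorial step is then the classical tail-to-$\ell_1$ bound used in compressive sensing. Because the blocks are ordered by magnitude, every entry of $({\bf x}-{\bf x}^s)_{T_i}$ is dominated by the average magnitude on $T_{i-1}$, so $\|({\bf x}-{\bf x}^s)_{T_i}\|_\infty\le \|({\bf x}-{\bf x}^s)_{T_{i-1}}\|_1/s$ for $i\ge 2$, which upgrades to $\|({\bf x}-{\bf x}^s)_{T_i}\|_2\le \|({\bf x}-{\bf x}^s)_{T_{i-1}}\|_1/\sqrt{s}$. Summing telescopically gives $\sum_{i\ge 2}\|({\bf x}-{\bf x}^s)_{T_i}\|_2\le \|{\bf x}-{\bf x}^s\|_1/\sqrt{s}$. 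Substituting back into the summation produces exactly the advertised bound
\begin{equation*}
\|\mu^{[n]}{\bf \Phi}^T_{\Upsilon}{\bf \Phi}({\bf x}-{\bf x}^s)\|_2
\;\le\;(1+\gamma_{2|\Upsilon|})\!\left[\|{\bf x}-{\bf x}^s\|_2+\frac{\|{\bf x}-{\bf x}^s\|_1}{\sqrt{s}}\right]\!,
\end{equation*}
where I read the statement's second occurrence of $\|{\bf x}-{\bf x}^s\|_2$ in the bracket as a typo for $\|{\bf x}-{\bf x}^s\|_1$, matching the usual Blumensath--Davies form.

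\textbf{Main obstacle.} The bookkeeping itself is routine; the only subtle point is justifying the cross-support bound $\|\mu^{[n]}{\bf \Phi}^T_{\Upsilon}{\bf \Phi}_{T_i}\|_2\le\gamma_{2|\Upsilon|}$ for disjoint $\Upsilon,T_i$ without losing a factor of $2$ or incurring an extra $(1+\gamma)$. This is exactly the content used to derive Eqn.~(\ref{mu_phi2_diff}) of Lemma~\ref{lemma_auxiliary_results}: extract $\mu^{[n]}{\bf \Phi}_\Upsilon^T{\bf \Phi}_{T_i}$ as an off-diagonal block of $\mathrm{I}-\mu^{[n]}{\bf \Phi}_{\Upsilon\cup T_i}^T{\bf \Phi}_{\Upsilon\cup T_i}$, whose operator norm is at most $\gamma_{|\Upsilon\cup T_i|}\le\gamma_{2|\Upsilon|}$. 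Once that is in hand, everything else reduces to the magnitude-ordering argument and a geometric/telescoping sum.
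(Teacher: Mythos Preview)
The paper does not supply its own proof of this lemma: it is stated as a citation from \cite{blumensath2010niht} and used as a black box in the proof of Theorem~3. Your block-decomposition argument is exactly the standard Blumensath--Davies proof from that reference, so there is nothing to compare against and your approach is the right one.

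One small bookkeeping remark: if you take the blocks $T_i$ of size $|\Upsilon|$ as you do, the tail-to-$\ell_1$ step yields $\|({\bf x}-{\bf x}^s)_{T_i}\|_2\le \|({\bf x}-{\bf x}^s)_{T_{i-1}}\|_1/\sqrt{|\Upsilon|}$, not $/\sqrt{s}$, and since $|\Upsilon|\le s$ this is the weaker inequality. To get the $\sqrt{s}$ in the denominator you should take blocks of size $s$; then $|\Upsilon\cup T_i|\le |\Upsilon|+s\le 2s$ and the natural constant becomes $\gamma_{2s}$ rather than $\gamma_{2|\Upsilon|}$. The paper only ever invokes the lemma with sets of size at most $s$ (so $\gamma_{2|\Upsilon|}$ and $\gamma_{2s}$ coincide in use), and given the evident typos in the displayed bound this discrepancy is cosmetic. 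Your identification of the $\|\cdot\|_2$/$\|\cdot\|_1$ typo is also correct.
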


\begin{lemma}\label{lemma_on_quantized_vector}
Let $Q(\cdot, b): \mathbb{R}^d\times \mathbb{Z}^+ \rightarrow \mathbb{R}^d$ denote quantization operator described in the paper. For any ${\bf v}\in \mathbb{R}^d$, the norm of quantization error can be bounded by
\begin{equation}
    \mathbb{E}[\| Q({\bf v}, b) - {\bf v}\|_2] \leq \frac{c_v\sqrt{M}}{2^{b-1}}
\end{equation}
where $c_{\bf v}$ is the maximum value of the components of ${\bf v}$ in magnitude.
\end{lemma}
\begin{remark}
For efficient fixed-point computation on Field Programmable Gate Array, we need an odd number of quantization levels, and therefore total number of levels for $b$ bit quantization is $2^{b-1}+1$. That is, the interval between two consecutive levels is $1/2^{b-2}$ provided the values are confined in the interval $[-1, 1]$ a priori.
\end{remark}
\begin{proof}
Let $\tilde{\bf v} = {\bf v}/c_v$. Using Jensen's inequality we can easily show that 
\begin{equation*}\label{quantization_error_bound1}
\scriptsize
\begin{split}
    \mathbb{E}[\| Q(\tilde{\bf v}, b) - \tilde{\bf v}\|_2] \leq \sqrt{\mathbb{E}[\| Q(\tilde{\bf v}, b) - \tilde{\bf v}\|^2_2]}= \sqrt{\sum_{i=1}^{M} \mathbb{E}[\big (Q(\tilde{\bf v}, b)_i -\tilde{v_i}\big)^2]}
    \leq \sqrt{\sum_{i=1}^{M} \mathbb{P}(Q(\tilde{\bf v}, b)_i = \ell_j)(\tilde{v}_i-\ell_j)^2 + \mathbb{P}(Q(\tilde{\bf v}, b)_i = \ell_{j+1})(\ell_{j+1}-\tilde{v}_i)^2}.
\end{split}
\end{equation*}
Our quantization scheme uses a stochastic approach such that $\mathbb{P}(Q(\hat{\bf v}, b)_i = \ell_j)=\frac{\ell_{j+1}-\tilde{v}_i}{\ell_{j+1}-\ell_{j}}$, and hence $\mathbb{P}(Q(\tilde{\bf v}, b)_i = \ell_{j+1})=1-\frac{\ell_{j+1}-\tilde{v}_i}{\ell_{j+1}-\ell_{j}}$. Substituting these into the above inequality we have
\begin{equation}\label{quantization_error_sub}
\begin{split}
    \mathbb{E}[\| Q(\tilde{\bf v}, b) - \tilde{\bf v}\|_2] \leq \sqrt{\sum_{i=1}^{n} (l_{j+1}-Q(\tilde{\bf v}, b)_i)(Q(\tilde{\bf v}, b)_i-\ell_j)}.
\end{split}
\end{equation}

It can easily be seen that $(l_{j+1}-Q(\hat{\bf v}, b)_i)(Q(\hat{\bf v}, b)_i-\ell_j)$ is maximized when $Q(\hat{\bf v}, b)_i) =  \frac{\ell_{j+1}-\ell_j}{2}$, moreover the quantization function implies that $\ell_{j+1}-\ell_j = \frac{1-(-1)}{l} = \frac{1}{2^{b-2}}$

\begin{equation}\label{quantization_error}
\begin{split}
    \mathbb{E}[\| Q(\tilde{\bf v}, b) - \tilde{\bf v}\|_2] \leq \sqrt{\sum_{i=1}^{M} \frac{(\ell_{j+1}-\ell_j)^2}{4}}
    \leq \frac{\sqrt{M}(\ell_{j+1}-\ell_j)}{2}\leq \frac{\sqrt{M}}{2^{b-1}}.
\end{split}
\end{equation}
\end{proof}
\subsection{Proof of Theorem 3}\label{Sec:proof of main theorem}
The recovery error can be split into two parts by using triangle inequality
\begin{equation}\label{main_theorem}
\begin{split}
   \mathbb{E}[ \|\hat{{\bf x}}^{[n+1]} - {\bf x}^s \|_2 | \hat{{\bf x}}^{[n]}] = \mathbb{E}[\|\hat{{\bf x}}_{\hat{B}^{[n+1]}}^{[n+1]} - {\bf x}_{\hat{B}^{[n+1]}}^s \|_2| \hat{{\bf x}}^{[n]}] &\\
    \leq \mathbb{E}[\|\hat{{\bf x}}_{\hat{B}^{[n+1]}}^{[n+1]} - \hat{{\bf a}}_{\hat{B}^{[n+1]}}^{[n+1]} \|_2| \hat{{\bf x}}^{[n]}] + \mathbb{E}[\|\hat{{\bf a}}_{\hat{B}^{[n+1]}}^{[n+1]} -  {\bf x}_{\hat{B}^{[n+1]}}^s& \|_2| \hat{{\bf x}}^{[n]}].
    \end{split}
\end{equation}
where the equality follows from that $\hat{{\bf x}}^{[n+1]} - {\bf x}^s$ is supported over the set $\hat{B}^{[n+1]} = \hat{\Gamma}^{[n+1]} \cup \Gamma^s$.

Recall that $\hat{{\bf x}}_{\hat{B}^{[n+1]}}^{[n+1]}$ is a better s-term approximation to $\hat{{\bf a}}_{\hat{B}^{[n+1]}}^{[n+1]}$ than ${\bf x}_{\hat{B}^{[n+1]}}^{s}$ \big(i.e., $\|\hat{{\bf x}}^{[n+1]} - \hat{{\bf a}}_{\hat{B}^{[n+1]}}^{[n+1]} \|_2\leq \| \hat{{\bf a}}_{\hat{B}^{[n+1]}}^{[n+1]} - {\bf x}^s \|_2$\big). Then 
\begin{equation}\label{starting_bound}
    \mathbb{E}[\|\hat{{\bf x}}^{[n+1]} - {\bf x}^s \|_2| \hat{{\bf x}}^{[n]}] \leq 2\mathbb{E}[ \|\hat{{\bf a}}_{\hat{B}^{[n+1]}}^{[n+1]} - {\bf x}_{\hat{B}^{[n+1]}}^s \|_2 | \hat{{\bf x}}^{[n]}]
\end{equation}
Using triangle inequality, we further have
\begin{equation}\label{nonproof_bound_final}
\begin{split}
    \mathbb{E}[\|\hat{{\bf x}}^{[n+1]} - {\bf x}^s \|_2| \hat{{\bf x}}^{[n]}]
    \leq 2 \big [\mathbb{E}[\|\hat{{\bf a}}_{\hat{B}^{[n+1]}}^{[n+1]} - {{\bf a}}_{\hat{B}^{[n+1]}}^{[n+1]} \|_2  + \|{{\bf a}}_{\hat{B}^{[n+1]}}^{[n+1]} - {{\bf x}}_{\hat{B}^{[n+1]}}^{s} \|_2 | \hat{{\bf x}}^{[n]}]\big ]
    \end{split}
\end{equation}

We now continue with the analysis referring to two terms on the right hand side of (\ref{nonproof_bound_final}) separately.\\

{(a)} Expanding $\hat{{\bf a}}_{\hat{B}^{[n+1]}}^{[n+1]}$ and ${{\bf a}}_{\hat{B}^{[n+1]}}^{[n+1]}$ we have
\begin{equation}\label{quantization_error_terms}
    \begin{split}
      \mathbb{E}[ & \|\hat{{\bf a}}_{\hat{B}^{[n+1]}}^{[n+1]} - {{\bf a}}_{\hat{B}^{[n+1]}}^{[n+1]} \|_2| \hat{{\bf x}}^{[n]}] \\
      &= \mathbb{E}[\|\hat{\mu}^{[n]}Q_1({\bf {\bf \Phi}})_{\hat{B}^{[n+1]}}^T\big(Q_y({\bf y})-Q_2({\bf {\bf \Phi}})\hat{{\bf x}}^{[n]}\big)\\
      & \ \ \ \ \ \ \ \ \ \ \ \  \ \  - {\mu}^{[n]}{\bf {\bf \Phi}}_{\hat{B}^{[n+1]}}^T({\bf y}-{\bf {\bf \Phi}}\hat{{\bf x}}^{[n]}) \|_2| \hat{{\bf x}}^{[n]}]\\
        &=\mathbb{E}[\|\hat{\mu}^{[n]}Q_1({\bf {\bf \Phi}})_{\hat{B}^{[n+1]}}^T\big({\bf {\bf \Phi}} {\bf x}^s +\boldsymbol{\varepsilon}+\boldsymbol{\epsilon}_y-Q_2({\bf {\bf \Phi}})\hat{{\bf x}}^{[n]}\big)\\
         &\ \ \ \ \ \ \ \ \ \ \ \ -{\mu}^{[n]}{\bf {\bf \Phi}}_{\hat{B}^{[n+1]}}^T({\bf {\bf \Phi}}{\bf x}^s +\boldsymbol{\varepsilon}-{\bf {\bf \Phi}}\hat{{\bf x}}^{[n]}) \|_2| \hat{{\bf x}}^{[n]}]\\
         &=\mathbb{E}[\|\hat{\mu}^{[n]}Q_1({\bf {\bf \Phi}})_{\hat{B}^{[n+1]}}^T\big(-Q_2({\bf {\bf \Phi}}) {\bf r}^{[n]} +\boldsymbol{\varepsilon}+\boldsymbol{\epsilon}_y+({\bf {\bf \Phi}}\\
         & \ \ \ \ \ \ -Q_2({\bf {\bf \Phi}})){\bf x}^{s}\big)+{\mu}^{[n]}{\bf {\bf \Phi}}_{\hat{B}^{[n+1]}}^T({\bf {\bf \Phi}}{\bf r}^{[n]} -\boldsymbol{\varepsilon}) \|_2| \hat{{\bf x}}^{[n]}]\\
        &\leq \|\big(\mu^{[n]}{\bf {\bf \Phi}}_{\hat{B}^{[n+1]}}^T{\bf {\bf \Phi}} - \hat{\mu}^{[n]}Q_1({\bf {\bf \Phi}})_{\hat{B}^{[n+1]}}^T Q_2({\bf {\bf \Phi}})\big) {\bf r}^{[n]} \|_2\\
         &\ \ \ \ \ \ \ \ \ \ \ \ \ +\|\big(\mu^{[n]}{\bf {\bf \Phi}}_{\hat{B}^{[n+1]}}^T - \hat{\mu}^{[n]}Q_1({\bf {\bf \Phi}})_{\hat{B}^{[n+1]}}^T \big)\boldsymbol{\varepsilon} \|_2\\
      &\ \ \ \  \ \ \ \ \ \ \ \ \ +\mathbb{E}[\|\hat{\mu}^{[n]}Q_1({\bf {\bf \Phi}})_{\hat{B}^{[n+1]}}^T\boldsymbol{\epsilon}_y \|_2]\\
      &\ \ \ \  \ \ \ \ \ \ \ \ \ +\mathbb{E}[\| \hat{\mu}^{[n]}Q_1(\boldsymbol{\Phi})_{\hat{B}^{[n+1]}}{}^T\big(\boldsymbol{\Phi}-Q_2(\boldsymbol{\Phi})\big){\bf x}^s \|_2].
    \end{split}
\end{equation} 
where we used the expansion ${{\bf r}}^{[n]} = \hat{{\bf x}}^{[n]}-{\bf x}^s$. We further derive the terms governing the above expression in (a.1), (a.2), (a.3) and (a.4). 

(a.1) Since ${\bf r}^{[n]}$ is supported over $\hat{B}^{[n]}$, we clearly have
\begin{equation*}\label{first_term_q_bound}
    \begin{split}
     &\|\big(\mu^{[n]}{\bf {\bf \Phi}}_{\hat{B}^{[n+1]}}^T{\bf {\bf \Phi}} - \hat{\mu}^{[n]}Q_1({\bf {\bf \Phi}})_{\hat{B}^{[n+1]}}^T Q_2({\bf {\bf \Phi}})\big) {\bf r}^{[n]} \|_2\\ 
     & \leq\|\big(\mu^{[n]}{\bf {\bf \Phi}}_{\hat{B}^{[n+1]}}^T{\bf {\bf \Phi}}_{\hat{B}^{[n+1]}} - \hat{\mu}^{[n]}Q_1({\bf {\bf \Phi}})_{\hat{B}^{[n+1]}}^T Q_2({\bf {\bf \Phi}})_{\hat{B}^{[n+1]}}\big)\\
     &\ \ \ \ \ {\bf r}_{\hat{B}^{[n+1]}}^{[n]}\|_2+ \|\big ( \mu^{[n]}{\bf {\bf \Phi}}_{\hat{B}^{[n+1]}}^T{\bf {\bf \Phi}}_{\hat{B}^{[n]}\backslash \hat{B}^{[n+1]}}\\
     &\ \ \ \ \ -\hat{\mu}^{[n]}Q_1({\bf {\bf \Phi}})_{\hat{B}^{[n+1]}}^T Q_2({\bf {\bf \Phi}})_{\hat{B}^{[n]}\backslash \hat{B}^{[n+1]}}\big ) {\bf r}_{\hat{B}^{[n]}\backslash \hat{B}^{[n+1]}}^{[n]} \|_2.
    \end{split}
\end{equation*}
Using the second inequality in Lemma~\ref{lemma_auxiliary_results} we have
{\small
\begin{equation}\label{residual_bound}
\begin{split}
 \|\big(\mu^{[n]}{\bf {\bf \Phi}}_{\hat{B}^{[n+1]}}^T{\bf {\bf \Phi}}_{\hat{B}^{[n+1]}} - \hat{\mu}^{[n]}Q_1({\bf {\bf \Phi}}&)_{\hat{B}^{[n+1]}}^T Q_2({\bf {\bf \Phi}})_{\hat{B}^{[n+1]}}\big) {\bf r}_{\hat{B}^{[n+1]}}^{[n]}\|_2 \\
& \leq (\gamma_{2s} + \hat{\gamma}_{2s})\| {\bf r}_{\hat{B}^{[n+1]}}^{[n]}\|_2.
 \end{split}
\end{equation}
}
Let now $\hat{B}^{[n+1]}$ be split into two disjoint sets $\Gamma_1$ and $\Gamma_2$, where $\Gamma_1 \cap \Gamma_2 = \emptyset$ and $|\Gamma_1|, |\Gamma_2| \leq s$. By the third inequality in Lemma~\ref{lemma_auxiliary_results}, we have
\begin{equation}\label{residual_full_precision}
\begin{split}
&\|\big ( \mu^{[n]}{\bf \Phi}_{\hat{B}^{[n+1]}}{}^T{\bf \Phi}_{\hat{B}^{[n]}\backslash \hat{B}^{[n+1]}}\\
&\ \ \ \ - \hat{\mu}^{[n]}Q_1({\bf \Phi})_{\hat{B}^{[n+1]}}{}^T Q_2({\bf \Phi})_{\hat{B}^{[n]}\backslash \hat{B}^{[n+1]}}\big ) {\bf r}^{[n]}_{\hat{B}^{[n]}\backslash \hat{B}^{[n+1]}} \|_2 \\
&\leq \bigg(\|\big ( \mu^{[n]}{\bf \Phi}_{\Gamma_1}{}^T{\bf \Phi}_{\hat{B}^{[n]}\backslash \hat{B}^{[n+1]}}\\
& \ \ \ \ -\hat{\mu}^{[n]}Q_1({\bf \Phi})_{\Gamma_1}{}^T Q_2({\bf \Phi})_{\hat{B}^{[n]}\backslash \hat{B}^{[n+1]}}\big ) {\bf r}^{[n]}_{\hat{B}^{[n]}\backslash \hat{B}^{[n+1]}}\|_2^2\\
&\ \ \ \ + \|\big ( \mu^{[n]}{\bf \Phi}_{\Gamma_2}{}^T{\bf \Phi}_{\hat{B}^{[n]}\backslash \hat{B}^{[n+1]}}\\
& \ \ \ \ - \hat{\mu}^{[n]}Q_1({\bf \Phi})_{\Gamma_2}{}^T Q_2({\bf \Phi})_{\hat{B}^{[n]}\backslash \hat{B}^{[n+1]}}\big ) {\bf r}^{[n]}_{\hat{B}^{[n]}\backslash \hat{B}^{[n+1]}}\|_2^2\bigg)^{\frac{1}{2}}\\
& \leq \sqrt{2}(\gamma_{2s}+  \hat{\gamma}_{2s})\| {\bf r}^{[n]}_{\hat{B}^{[n]}\backslash \hat{B}^{[n+1]}}\|_2.
\end{split}
\end{equation}
Combining (\ref{residual_bound}) and (\ref{residual_full_precision}),
\begin{equation}\label{a1}
    \begin{split}
     &\|\big(\mu^{[n]}{\bf \Phi}_{\hat{B}^{[n+1]}}^T{\bf \Phi} - \hat{\mu}^{[n]}Q_1({\bf \Phi})_{\hat{B}^{[n+1]}}^T Q_2({\bf \Phi})\big) {\bf r}^{[n]} \|_2\\
     &= (\gamma_{2s}+ \hat{\gamma}_{2s})\|{\bf r}^{[n]}_{\hat{B}^{[n+1]}}\|_2 + \sqrt{2}(\gamma_{2s}+ \hat{\gamma}_{2s})\|{\bf r}^{[n]}_{\hat{B}^{[n]}\backslash \hat{B}^{[n+1]}} \|_2\\
     &\leq 2 (\gamma_{2s}+ \hat{\gamma}_{2s})\|{\bf r}^{[n]} \|_2
    \end{split}
\end{equation}
where the last inequality follows from the fact that ${\bf r}_{\hat{B}^{[n+1]}}^{[n]}$ and ${\bf r}_{\hat{B}^{[n]}\backslash \hat{B}^{[n+1]}}$ are orthogonal.

(a.2) Expanding the second term in (\ref{quantization_error_terms})
\begin{equation}
    \begin{split}
        &\|(\mu^{[n]}{\bf \Phi}_{\hat{B}^{[n+1]}} - \hat{\mu}^{[n]}Q_1({\bf \Phi})_{\hat{B}^{[n+1]}}^T)\boldsymbol{\varepsilon} \|_2 \\ 
        &\leq\|(\mu^{[n]}{\bf \Phi}_{\hat{B}^{[n+1]}} - \hat{\mu}^{[n]}Q_1({\bf \Phi})_{\hat{B}^{[n+1]}}^T){\bf e} \|_2\\
        & \ \  \ \ + \|(\mu^{[n]}{\bf \Phi}_{\hat{B}^{[n+1]}} - \hat{\mu}^{[n]}Q_1({\bf \Phi})_{\hat{B}^{[n+1]}}^T){\bf \Phi} ({\bf x}-{\bf x}^s) \|_2.
    \end{split}
\end{equation}

Using (\ref{residual_x_bound}) and Lemma~\ref{lemma_auxiliary_results} we have
\begin{equation}\label{second_term_quantization_error}
    \begin{split}
    &\|(\mu^{[n]}{\bf \Phi}_{\hat{B}^{[n+1]}} - \hat{\mu}^{[n]}Q_1({\bf \Phi})_{\hat{B}^{[n+1]}}^T){\bf e} \|_2\\
    &\leq \max \big (  (1+\gamma_{2s})/\beta_{2s}, (1+\hat{\gamma}_{2s})/\hat{\beta}_{2s}\big ) \|{\bf e} \|_2\\
    & \|(\mu^{[n]}{\bf \Phi}_{\hat{B}^{[n+1]}} - \hat{\mu}^{[n]}Q_1({\bf \Phi})_{\hat{B}^{[n+1]}}^T){\bf \Phi} ({\bf x}-{\bf x}^s) \|_2\\
        &\leq \big ( \| (\mu^{[n]}{\bf \Phi}_{\Gamma_1} - \hat{\mu}^{[n]}Q_1({\bf \Phi})_{\Gamma_1}^T){\bf \Phi} ({\bf x}-{\bf x}^s) \|^2_2\\
        & \ \ \ \ \ \ \ + \|(\mu^{[n]}{\bf \Phi}_{\Gamma_2} - \hat{\mu}^{[n]}Q_1({\bf \Phi})_{\Gamma_2}^T){\bf \Phi} ({\bf x}-{\bf x}^s) \|^2_2\big )^{1/2}\\
        & \leq\sqrt{2}(\hat{\gamma}_{2s}+ \hat{\gamma}_{2s}) \bigg [ \|{\bf x}-{\bf x}^s \|_2 + \frac{\|{\bf x}-{\bf x}^s \|_1}{\sqrt{s}}\bigg].
    \end{split}
\end{equation}
Combining results obtained in~(\ref{second_term_quantization_error})
\begin{equation}\label{a2}
\begin{split}
    &\|(\mu^{[n]}{\bf \Phi}_{\hat{B}^{[n+1]}} - \hat{\mu}^{[n]}Q_1({\bf \Phi})_{\hat{B}^{[n+1]}}^T)\boldsymbol{\bf \varepsilon} \|_2 \\
    &\leq  \ \max \big (  (1+\gamma_{2s})/\beta_{2s}, (1+\hat{\gamma}_{2s})/\hat{\beta}_{2s}\big )  \|{\bf e} \|_2\\
    & \ \ \  + \sqrt{2}({\gamma}_{2s}+ \hat{\gamma}_{2s})\bigg  [ \|{\bf x}-{\bf x}^s \|_2 + \frac{\|{\bf x}-{\bf x}^s \|_1}{\sqrt{s}}\bigg].
\end{split}
\end{equation}
(a.3) The third term of (\ref{quantization_error_terms})
\begin{equation}\label{a3}
    \begin{split}
         \mathbb{E}[\|\hat{\mu}^{[n+1]}Q_1({\bf \Phi})^T _{\hat{B}^{[n+1]}}\boldsymbol{\epsilon}_y \|_2]& \ \stackrel{(1)}{\leq} \  \frac{(1+\hat{\gamma}_{2s})}{\hat{\beta}_{2s}}\mathbb{E}[\|\boldsymbol{\epsilon}_y \|_2]\\
         &\ \stackrel{(2)}{\leq}\frac{(1+\hat{\gamma}_{2s})c_y\sqrt{M}}{\hat{\beta}_{2s}2^{b_{\bf y}-1}}.
    \end{split}
\end{equation}
where the inequalities follows from (1) (\ref{Eq:rip_lowprecision}) together with Remark~\ref{Remark:mu_bounds}, and (2) Lemma~\ref{lemma_on_quantized_vector}.

(a.4) Combining with (\ref{Eq:rip_lowprecision}), Remark~\ref{Remark:mu_bounds}, {\it Cauchy-Bunyakovsky-Schwarz}, {\it Jensen} inequalities and the similar discussion above
\begin{equation}\label{a4}
\begin{split}
    &\mathbb{E}[\| \hat{\mu}^{[n]}Q_1({\bf \Phi})_{\hat{B}^{[n+1]}}^T\big({\bf {\bf \Phi}}-Q_2({\bf {\bf \Phi}})\big){\bf x}^s \|_2]\\
    & \leq \ \frac{(1+\hat{\gamma}_{2s})}{\hat{\beta}_{2s}}\mathbb{E}[\|\big({\bf {\bf \Phi}}-Q_2({\bf {\bf \Phi}})\big){\bf x}^s \|_2]\\
    & \leq \ \frac{(1+\hat{\gamma}_{2s})}{\hat{\beta}_{2s}}\sqrt{
    \sum_{i}^M \sum_{j}^N\mathbb{E}[({\bf \Phi}_{i, j}-Q_2({\bf \Phi}_{i, j}){\bf x}_j^s)^2]}\\
    &  =  \frac{(1+\hat{\gamma}_{2s})c_{\boldsymbol{\Phi}}\sqrt{M}}{\hat{\beta}_{2s}2^{b_{\bf \Phi}-1}}\|{\bf x}^s \|_2.
\end{split}
\end{equation}

(b) Finally, we bound the second term on the right hand side of (\ref{nonproof_bound_final}) as follows.
\begin{equation}\label{last_term_main_theorem}
    \begin{split}
       &\|{{\bf a}}_{\hat{B}^{[n+1]}}^{[n+1]} - {{\bf x}}_{\hat{B}^{[n+1]}}^{s} \|_2\\
       &= \|{\hat{\bf x}}_{\hat{B}^{[n+1]}}^{[n]} + \mu^{[n]}{\bf \Phi}_{\hat{B}^{[n+1]}}^T({\bf y}-{\bf \Phi} \hat{{\bf x}}^{[n]}) - {{\bf x}}_{\hat{B}^{[n+1]}}^{s} \|_2\\
        &= \|{\hat{\bf x}}_{\hat{B}^{[n+1]}}^{[n]} + \mu^{[n]}{\bf \Phi}_{\hat{B}^{[n+1]}}^T({\bf \Phi} {\bf x}^s + \boldsymbol{\varepsilon}-{\bf \Phi} \hat{\bf x}^{[n]}) - {{\bf x}}_{\hat{B}^{[n+1]}}^{s} \|_2\\
         &= \|{{\bf r}}_{\hat{B}^{[n+1]}}^{[n]} - \mu^{[n]}{\bf \Phi}_{\hat{B}^{[n+1]}}^T({\bf \Phi} {\bf r}^{[n]}  -\boldsymbol{\varepsilon}) \|_2\\
         &= \|{{\bf r}}_{\hat{B}^{[n+1]}}^{[n]} - \mu^{[n]}{\bf \Phi}_{\hat{B}^{[n+1]}}^T({\bf \Phi}_{\hat{B}^{[n+1]}}{\bf r}^{[n]}_{\hat{B}^{[n+1]}}\\
         &\ \ \ \ + {\bf \Phi}_{\hat{B}^{[n]}\backslash \hat{B}^{[n+1]}}{\bf r}^{[n]}_{\hat{B}^{[n]}\backslash \hat{B}^{[n+1]}}  - \boldsymbol{\varepsilon}) \|_2\\
         &\leq \|({\boldsymbol{\rm I}} - \mu^{[n]}{\bf \Phi}_{\hat{B}^{[n+1]}}^T{\bf \Phi}_{\hat{B}^{[n+1]}}){\bf r}^{[n]}_{\hat{B}^{[n+1]}}\|_2\\
         &\ \ \ \ + \|\mu^{[n]}{\bf \Phi}_{\hat{B}^{[n+1]}}^T{\bf \Phi}_{\hat{B}^{[n]}\backslash \hat{B}^{[n+1]}}{\bf r}^{[n]}_{\hat{B}^{[n]}\backslash \hat{B}^{[n+1]}}\|_2\\
         &\ \ \ \ + \|\mu^{[n]}{\bf \Phi}_{\hat{B}^{[n+1]}}^T\boldsymbol{\varepsilon}\|_2.
    \end{split}
\end{equation}

It can be verified by using (\ref{Eq:rip}), Remark~\ref{Remark:mu_bounds} and (\ref{residual_x_bound}) that
\begin{equation}\label{last_term_main_theorem2}
    \begin{split}
        &\|({\boldsymbol{\rm I}} - \mu^{[n]}{\bf \Phi}_{\hat{B}^{[n+1]}}{}^T{\bf \Phi}_{\hat{B}^{[n+1]}}){\bf r}^{[n]}_{\hat{B}^{[n+1]}}\|_2 \overset{(1)}{\leq} \gamma_{2s}\| {\bf r}^{[n]}_{\hat{B}^{[n+1]}}\|_2\\
        &\|\mu^{[n]}{\bf \Phi}_{\hat{B}^{[n+1]}}^T{\bf \Phi}_{\hat{B}^{[n]}\backslash \hat{B}^{[n+1]}}{\bf r}^{[n]}_{\hat{B}^{[n]}\backslash \hat{B}^{[n+1]}}\|_2\\
        &{\leq} \big (\|\mu^{[n]}{\bf \Phi}_{\Gamma_1}^T{\bf \Phi}_{\hat{B}^{[n]}\backslash \hat{B}^{[n+1]}}{\bf r}^{[n]}_{\hat{B}^{[n]}\backslash \hat{B}^{[n+1]}}\|_2^2\\
        &\ \  + \|\mu^{[n]}{\bf \Phi}_{\Gamma_2}^T{\bf \Phi}_{\hat{B}^{[n]}\backslash \hat{B}^{[n+1]}}{\bf r}^{[n]}_{\hat{B}^{[n]}\backslash \hat{B}^{[n+1]}}\|_2^2\big )^{1/2}\\
        &\overset{(2)}{\leq}\sqrt{2}\gamma_{2s}\| {\bf r}^{[n]}_{\hat{B}^{[n]}\backslash \hat{B}^{[n+1]}}\|_2\\
        &\|\mu^{[n]}{\bf \Phi}_{\hat{B}^{[n+1]}}^T\varepsilon\|_2\\ &\overset{(3)}{\leq} \frac{1+\gamma_{2s}}{\beta_{2s}}\| {\bf e}\|_2 + \sqrt{2}(1+\gamma_{2s}) \Big [ \|{\bf x} - {\bf x}^s\|_2 - \frac{\|{\bf x} - {\bf x}^s\|_1}{\sqrt{s}}\Big].
    \end{split}
\end{equation}

By the orthogonality between ${\bf r}_{\hat{B}^{[n+1]}}^{[n]}$ and ${\bf r}_{\hat{B}^{[n]} \backslash \hat{B}^{[n+1]}}^{[n]}$, (\ref{last_term_main_theorem}) can further be simplified to
\begin{equation}\label{b}
    \begin{split}
        &\|{{\bf a}}_{\hat{B}^{[n+1]}}^{[n+1]} - {{\bf x}}_{\hat{B}^{[n+1]}}^{s} \|_2 |{\bf x}^{[n]}\\
        &\leq 2 \gamma_{2s} \| {\bf r}^{[n]}\|_2 + \frac{1+\gamma_{2s}}{\beta_{2s}}\| {\bf e}\|_2\\
        & \ \ \ + \sqrt{2}(1+\gamma_{2s}) \Big [ \|{\bf x} - {\bf x}^s\|_2 - \frac{\|{\bf x} - {\bf x}^s\|_1}{\sqrt{s}}\Big].
    \end{split}
\end{equation}

Substituting (\ref{a1}), (\ref{a2}), (\ref{a3}), (\ref{a4}) and (\ref{b}) into (\ref{starting_bound}), the norm of recovery error is given by
\begin{equation}\label{final_error_residual}
    \begin{split}
    &\mathbb{E}[\| {\bf r}^{[n+1]}\|_2 |{\bf r}^{[n]}]\\
    &\leq 12\max(\gamma_{2s},\hat{\gamma}_{2s})\|{\bf r}^{[n]}\|_2+4\max \big ( \frac{1+\gamma_{2s}}{\beta_{2s}},\frac{1+\hat{\gamma}_{2s}}{\hat{\beta}_{2s}} \big)\|{\bf e}\|_2\\
    & \ \ \ \  +2\sqrt{2}(3\max({\gamma}_{2s},\hat{\gamma}_{2s})+1)\bigg  [ \|{\bf x}-{\bf x}^s \|_2 + \frac{\|{\bf x}-{\bf x}^s \|_1}{\sqrt{s}}\bigg]\\
        & \ \ \ \ +2\frac{(1+\hat{\gamma}_{2s})\sqrt{M}}{\hat{\beta}_{2s}}\big ( \frac{c_{\bf \Phi}\|{\bf x}^s \|_2}{^{2^{b_{\bf \Phi}-1}}} + \frac{c_{\bf y}}{2^{b_{\bf y}-1}} \big )
    \end{split}
\end{equation}
Let ${\gamma_{2s}, \hat{\gamma}_{2s}} \leq t$. For $t\leq 1/24$, we have

\begin{equation*}
\begin{split}
        \mathbb{E}&[\|\hat{{\bf x}}^{[n+1]}-{\bf x}^s\|_2|\hat{{\bf x}}^{[0]}={\bf 0}] \leq 2^{-n}\|{\bf x}^{s}\|_2\\
        & \ \ \ \ + \frac{8.4}{\min(\beta_{2s},\hat{\beta}_{2s})}\|{\bf e}\|_2\\
        & \ \ \ \ +6.4\bigg  [ \|{\bf x}-{\bf x}^s \|_2 + \frac{\|{\bf x}-{\bf x}^s \|_1}{\sqrt{s}}\bigg]\\
        & \ \ \ \ +\frac{4.2\sqrt{M}}{\hat{\beta}_{2s}}\big ( \frac{c_{\bf \Phi}\|{\bf x}^s \|_2}{^{2^{b_{\bf \Phi}-1}}} + \frac{c_{\bf y}}{2^{b_{\bf y}-1}} \big )
\end{split}
\end{equation*}

and using the following notation:
\begin{equation*}
\begin{split}
      \epsilon_s & := \|{\bf x}-{\bf x}^s \|_2 + \frac{\|{\bf x}-{\bf x}^s \|_1}{\sqrt{s}} + \frac{1}{\min(\beta_{2s},\hat{\beta}_{2s})}||{\bf e}||_2 \\
     \epsilon_q & :=   \frac{\sqrt{M}}{\hat{\beta}_{2s}} \bigg ( \frac{\|{c_{\bf \Phi}\bf x}^s \|_2}{^{2^{b_{\bf \Phi}-1}}} + \frac{c_{\bf y}}{2^{b_{\bf y}-1}}\bigg )
\end{split}
\end{equation*}
we finally have
\begin{equation*}\label{general bound}
        \mathbb{E}[\|\hat{{\bf x}}^{[n+1]}-{\bf x}^s\|_2|\hat{{\bf x}}^{[0]}={\bf 0}] \leq 2^{-n}\|{\bf x}^{s}\|_2        +9\epsilon_s+ 4.5\epsilon_q.
\end{equation*}
\subsection{Proof of Lemma 1}\label{Sec: proof of lemma}
 Assume that ${\bf \Phi}_{\Gamma}$ has the singular values confined in $[\alpha_{|\Gamma|}, \beta_{|\Gamma|}]$. Through the perturbation of singular values of a matrix upon corruption of entries with noise, it is shown that Bernoulli noise, corrupting the entries of the matrix independently, lifts up the singular values of the matrix, and at most by $\sigma_{\max}\sqrt{|\Gamma|}$ where $\sigma_{\max}$ is the maximum of the noise standard deviations~\cite{steawart1990perturbation, steawart2006perturbation, vaccaro1987perturbation}. Therefore, singular values of $\hat{\bf \Phi}_{\Gamma}$ is in $[\alpha_{|\Gamma|}, \beta_{|\Gamma|}+\sigma_{\max}\sqrt{|\Gamma|}]$. Moreover, we previously showed that the variance of the quantization noise is at most $1/2^{b-1}$, hence we have $\sigma_{\max}=1/2^{b-1}$. Thus, $\hat{\gamma}_{|\Gamma|}$ satisfies
 \begin{equation*}
     \hat{\gamma}_{|\Gamma|} \leq {\gamma}_{|\Gamma|}+\frac{\sqrt{|\Gamma|}}{2^{b-1}\alpha_{|\Gamma|}}
 \end{equation*}
 
The above equation guarantees that whenever $\gamma_{|\Gamma|} +\epsilon\leq 1/24$, for some $\epsilon \geq\frac{\sqrt{|\Gamma|}}{2^{b-1}\alpha_{|\Gamma|}}$, $\hat{\gamma}_{|\Gamma|}$ is guaranteed to be lower than $1/24$.

\bibliographystyle{IEEEtran}
\bibliography{IEEEabrv,Bibliography}

\end{document}